\Crefname{algocf}{Algorithm}{Algorithms}
\theoremstyle{plain}
\newtheorem{theorem}{Theorem}[section]
\newaliascnt{proposition}{theorem}
\newtheorem{proposition}[proposition]{Proposition}
\newaliascnt{corollary}{theorem}
\newaliascnt{definition}{theorem}
\newtheorem{definition}[definition]{Definition}
\newaliascnt{example}{theorem}
\newaliascnt{remark}{theorem}
\newtheorem{remark}[remark]{Remark}
\newaliascnt{claim}{theorem}
\newtheorem{claim}[claim]{Claim}
\newcommand{%
	\def\svgwidth{\columnwidth}
	\import{}{.pdf_tex}
}[2][\columnwidth]{%
	\def\svgwidth{#1}
	\import{}{#2.pdf_tex}
}
\newcommand{\at}[3]{\left.#1\right\vert_{#2}^{#3}}
\DeclareMathOperator{\Adv}{Adv}
\DeclareMathOperator{\Learn}{\normalfont\textsc{Lr}}
\DeclareMathOperator{\Unlearn}{\normalfont\textsc{Ul}}
\DeclareMathOperator{\Retrain}{\normalfont\textsc{Retrain}}
\DeclareMathOperator{\None}{\normalfont\textsc{None}}
\DeclareMathOperator{\NegGrad}{\normalfont\textsc{NegGrad}}
\DeclareMathOperator{\Retrfinal}{\normalfont\textsc{Retrfinal}}
\DeclareMathOperator{\Ftfinal}{\normalfont\textsc{Ftfinal}}
\DeclareMathOperator{\Fisher}{\normalfont\textsc{Fisher}}
\DeclareMathOperator{\SalUN}{\normalfont\textsc{SalUn}}
\DeclareMathOperator{\SSD}{\normalfont\textsc{SSD}}
\DeclareMathOperator{\im}{im}
\renewcommand\paragraph{\@startsection{paragraph}{4}{\z@}%
    {0ex \@plus1ex \@minus1ex}%
    {-1em}%
    {\normalfont\normalsize\bfseries}}
\title{A Reliable Cryptographic Framework for Empirical Machine Unlearning Evaluation}
\author{
	Yiwen Tu\thanks{Equal contribution.}\\
	University of Michigan, Ann Arbor\\
	\texttt{evantu@umich.edu}\\
	\And
        Pingbang Hu\footnotemark[1]\\
	University of Illinois Urbana-Champaign\\
	\texttt{pbb@illinois.edu}\\
	\And
	Jiaqi W. Ma \\
	University of Illinois Urbana-Champaign\\
	\texttt{jiaqima@illinois.edu}\\
} 
\begin{document}

\maketitle

\begin{abstract}
    Machine unlearning updates machine learning models to remove information from specific training samples, complying with data protection regulations that allow individuals to request the removal of their personal data. Despite the recent development of numerous unlearning algorithms, reliable evaluation of these algorithms remains an open research question. In this work, we focus on membership inference attack (MIA) based evaluation, one of the most common approaches for evaluating unlearning algorithms, and address various pitfalls of existing evaluation metrics lacking theoretical understanding and reliability. Specifically, by modeling the proposed evaluation process as a \emph{cryptographic game} between unlearning algorithms and MIA adversaries, the naturally induced evaluation metric measures the data removal efficacy of unlearning algorithms and enjoys provable guarantees that existing evaluation metrics fail to satisfy. Furthermore, we propose a practical and efficient approximation of the induced evaluation metric and demonstrate its effectiveness through both theoretical analysis and empirical experiments. Overall, this work presents a novel and reliable approach to empirically evaluating unlearning algorithms, paving the way for the development of more effective unlearning techniques.
\end{abstract}

\section{Introduction}\label{sec:intro}
\emph{Machine unlearning} is an emerging research field in artificial intelligence (AI) motivated by the ``Right to be Forgotten,'' outlined by various data protection regulations such as the General Data Protection Regulation (GDPR)~\citep{MANTELERO2013229} and the California Consumer Privacy Act (CCPA)~\citep{CCPA}. Specifically, the Right to be Forgotten grants individuals the right to request that an organization erase their personal data from its databases, subject to certain exceptions. Consequently, when such data were used for training machine learning models, the organization may be required to update their models to ``unlearn'' the data to comply with the Right to be Forgotten. A naive solution is retraining the model on the remaining data after removing the requested data points, but this solution is computationally prohibitive. Recently, a plethora of unlearning algorithms have been developed to efficiently update the model without complete retraining, albeit usually at the price of removing the requested data information only approximately~\citep{cao2015towards, bourtoule2021machine, guo2020certified, neel2021descent, sekhari2021remember, chien2023efficient, NEURIPS2023_062d711f}.

Despite the active development of unlearning algorithms, the fundamental problem of properly evaluating these methods remains an open research question, as highlighted by the Machine Unlearning Competition held at NeurIPS~2023\footnote{See \url{https://unlearning-challenge.github.io/}.}. The unlearning literature has developed a variety of evaluation metrics for measuring the \emph{data removal efficacy} of unlearning algorithms, i.e., to which extent the information of the requested data points are removed from the unlearned model. Existing metrics can be roughly categorized as attack-based~\citep{graves2020amnesiac,NEURIPS2023_062d711f,goel2023adversarial, hayes2024inexact, DBLP:journals/corr/abs-2003-04247,goel2023adversarial}, theory-based~\citep{NeurIPSMUC,becker2022evaluating}, and retraining-based~\citep{golatkar2021mixedprivacy, wu2020deltagrad, izzo2021approximate}, respectively. Each metric has its own limitations and there is no consensus on a standard evaluation metric for unlearning. Among these metrics, the membership inference attack (MIA) based metric, which aims to determine whether specific data points were part of the original training dataset based on the unlearned model, is perhaps the most commonly seen in the literature. MIA is often considered a natural unlearning evaluation metric as it directly measures the privacy leakage of the unlearned model, which is a primary concern of unlearning algorithms.

Most existing literature directly uses MIA performance\footnote{For example, the accuracy or the area under the receiver operating characteristic curve (AUC) of the inferred membership.} to measure the data removal efficacy of unlearning algorithms. However, such metrics can be unreliable as MIA performance is not a well-calibrated metric when used for unlearning evaluation, leading to counterintuitive results. For example, naively retraining the model is theoretically optimal for data removal efficacy, albeit computationally prohibitive. Nevertheless, retraining is not guaranteed to yield the lowest MIA performance compared to other approximate unlearning algorithms. This discrepancy arises because MIAs themselves are imperfect and can make mistakes in inferring data membership. Furthermore, MIA performance is also sensitive to the composition of data used to conduct MIA and the specific choice of MIA algorithm. Consequently, the results obtained using different MIAs are not directly comparable and can vary significantly, making it difficult to draw definitive conclusions about the efficacy of unlearning algorithms. These limitations render the existing MIA-based evaluation brittle and highlight the need for a more reliable and comprehensive framework for assessing the performance of unlearning algorithms.

In this work, we aim to address the challenges associated with MIA-based unlearning evaluation by introducing a game-theoretical framework named the \emph{unlearning sample inference game}. Within this framework, we gauge the data removal efficacy through a game where, informally, the challenger (model provider) endeavors to produce an unlearned model, while the adversary (MIA adversary) seeks to exploit the unlearned model to determine the membership status of the given samples. By carefully formalizing the game, with controlled knowledge and interaction between both parties, we ensure that the success rate of the adversary in the unlearning sample inference game possesses several desirable properties, and thus can be used as an unlearning evaluation metric, circumventing the aforementioned pitfalls of MIA performance. Specifically, it ensures that the adversary's success rate towards the retrained model is precisely zero, thereby certifying retraining as the theoretically optimal unlearning method. Moreover, it provides a provable guarantee for certified machine unlearning algorithms~\citep{guo2020certified}, aligning the proposed metric with theoretical results in the literature. Lastly, it inherently accommodates the existence of multiple MIA adversaries, resolving the conflict between different choices of MIAs.
However, the computational demands of exactly calculating the proposed metric pose a practical issue. To mitigate this, we introduce a \emph{SWAP} test as a practical approximation, which also inherits many of the desirable properties of the exact metric. Empirically, this test proves robust to changes in random seed and dataset size, enabling model maintainers to conduct small-scale experiments to gauge the quality of their unlearning algorithms.

Finally, we highlight our contributions in this work as follows:
\begin{itemize}[leftmargin=*]
    \item We present a formalization of the \emph{unlearning sample inference game}, establishing a novel unlearning evaluation metric for data removal efficacy.
    \item We demonstrate several provable properties of the proposed metric, circumventing various pitfalls of existing MIA-based metrics.
    \item We introduce a straightforward and effective \emph{SWAP} test for efficient empirical analysis. Through thorough theoretical examination and empirical experiments, we show that it exhibits similar desirable properties.
\end{itemize}
In summary, this work offers a game-theoretic framework for reliable empirical evaluation of machine unlearning algorithms, tackling one of the most foundational problems in this field.
\section{Related work}\label{sec:related-work}
\subsection{Machine unlearning}
Machine unlearning, as initially introduced by \citet{cao2015towards}, is to update machine learning models to remove the influence of selected training data samples, effectively making the models ``forget'' those samples. Most unlearning methods can be categorized as exact unlearning and approximate unlearning. Exact unlearning requires the unlearned models to be indistinguishable from models that were trained from scratch without the removed data samples. However, it can still be computationally expensive, especially for large datasets and complex models. On the other hand, approximate unlearning aims to remove the influence of selected data samples while accepting a certain level of deviation from the exactly unlearned model. This allows for more efficient unlearning algorithms, making approximate unlearning increasingly popular practically. While approximate unlearning is more time and space-efficient, it does not guarantee the complete removal of the influence of the removed data samples. We refer the audience to the survey on unlearning methods by \citet{xu2023machine} for a more comprehensive overview.

\subsection{Machine unlearning evaluation}
Evaluating machine unlearning involves considerations of computational efficiency, model utility, and data removal efficacy. Computational efficiency refers to the time and space complexity of the unlearning algorithms, while model utility measures the prediction performance of the unlearned models. These two aspects can be measured relatively straightforwardly through, e.g., computation time, memory usage, or prediction accuracy. Data removal efficacy, on the other hand, assesses the extent to which the influence of the requested data points has been removed from the unlearned models, which is highly non-trivial to measure and has attracted significant research efforts recently. These efforts for evaluating or guaranteeing data removal efficacy can be categorized into several groups. We provide an overview below and refer readers to \Cref{adxsec:additional-related-work} for an in-depth review.

\begin{itemize}[leftmargin=*,topsep=0pt]
    \item \textbf{Retraining-based}: Generally, retraining-based evaluation measures the parameter or posterior difference between unlearned models and retrained models, the gold standard for data removal~\citep{9157084, golatkar2021mixedprivacy, he2021deepobliviate,izzo2021approximate, peste2021ssse, wu2020deltagrad}.
          However, they are often unreliable as measures like parameter difference can be sensitive to the randomness led by the training dynamics~\citep{cretu2023re}.
    \item \textbf{Theory-based}: Another line of work tries to characterize data removal efficacy by requiring a strict theoretical guarantee for the unlearned models~\citep{chien2023efficient,guo2020certified,neel2021descent} or turning to information-theoretic analysis~\citep{becker2022evaluating}. However, they have strong model assumptions or require inefficient white-box access to target models, thus limiting their applicability in practice.
    \item \textbf{Attack-based}: Since attacks are the most direct way to interpret privacy risks, attack-based evaluation is a common metric in unlearning literature~\citep{chen2021machine,goel2023adversarial,graves2020amnesiac, hayes2024inexact, NEURIPS2023_062d711f, DBLP:journals/corr/abs-2003-04247, song2020systematic}. Our work belongs to this category and addresses the pitfalls of existing attack-based methods.
\end{itemize}
\section{Proposed evaluation framework}\label{sec:proposed-evaluation-framework}
\subsection{Preliminaries}\label{sec:preliminaries}
To mitigate the limitations of directly using MIA accuracy as the evaluation metric for unlearning algorithms, we draw inspiration from \emph{cryptographic games}~\citep{katz2007introduction}, which are a fundamental tool to define and analyze the security properties of cryptographic protocols. In particular, we leverage the notion of \textbf{\emph{advantage}} to form a more reliable and well-calibrated metric for evaluating the effectiveness of unlearning algorithms. We leave a more detailed description to \Cref{sec:advantage}.

In the rest of this section, we introduce the proposed unlearning evaluation framework based on a carefully designed cryptographic game and an advantage metric associated with the game. We also provide provable guarantees for the soundness of the proposed metric.

\subsection{Unlearning sample inference game}
We propose the \emph{unlearning sample inference game} \(\mathcal{G} = (\Unlearn, \mathcal{A}, \mathcal{D}, \mathbb{P}_{\mathcal{D}}, \alpha)\) that characterizes the privacy risk of the unlearned models against an MIA adversary. It involves two players (a \emph{challenger} named \(\Unlearn\) and an \emph{adversary} \(\mathcal{A}\)), a finite dataset \(\mathcal{D}\) with a \emph{sensitivity distribution} \(\mathbb{P}_{\mathcal{D}}\) defined over \(\mathcal{D}\), and an \emph{unlearning portion} parameter \(\alpha\). Intuitively, the game works as follows:
\begin{itemize}[leftmargin=*,topsep=0pt]
    \item the challenger \(\Unlearn\) performs unlearning on a ``forget set'' of data for a model trained on the union of the ``retain set'' and the ``forget set,'' with sizes of two subsets of \(\mathcal{D}\) subject to a ratio \(\alpha\);
    \item the adversary \(\mathcal{A}\) attacks the challenger's unlearned model by telling whether some random data points (according to \(\mathbb{P}_{\mathcal{D}}\)) are originally in the ``forget set'' or an unused set of data called ``test set.''
\end{itemize}
An illustration is given in \Cref{fig:flow}. A detailed discussion of various design choices we made can be found in \Cref{adxsec:design-choices}. Below, we formally introduce the game, starting with the initialization phase.

\begin{figure}[t]
    \centering
	\def\svgwidth{\columnwidth}
	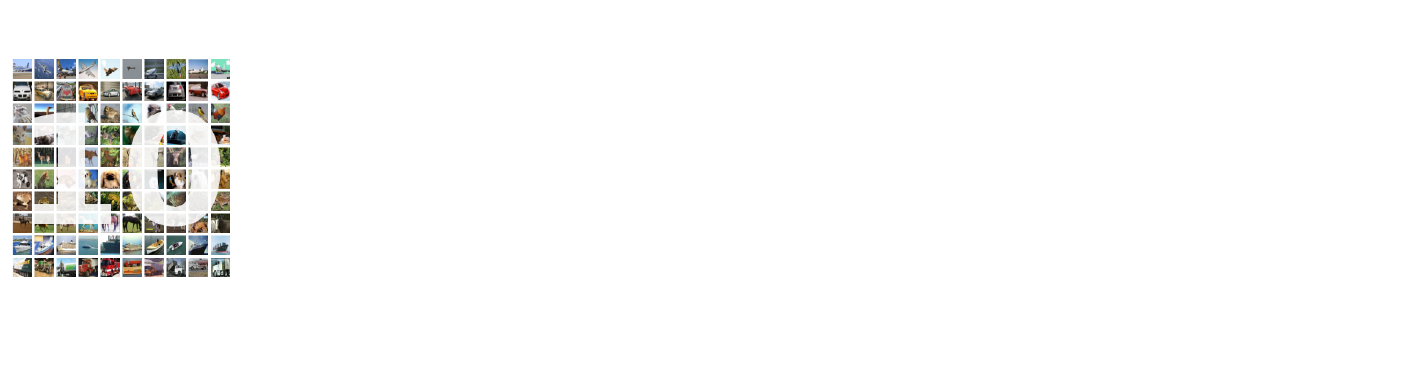

    \caption{The unlearning sample inference game framework for our machine unlearning evaluation.}
    \label{fig:flow}
\end{figure}

\paragraph{Initialization.}
The game starts by randomly splitting the dataset \(\mathcal{D}\) into three disjoint sets: a \emph{retain set} \(\mathcal{R}\), a \emph{forget set} \(\mathcal{F}\), and a \emph{test set} \(\mathcal{T}\), i.e., \(\mathcal{D} \eqqcolon \mathcal{R} \cup \mathcal{F} \cup \mathcal{T}\), subject to the following restrictions:
\begin{enumerate}[(a), leftmargin=*,topsep=0pt]
    \item\label{restriction-1} \(\alpha = |\mathcal{F}| / |\mathcal{R} \cup \mathcal{F}|\): The \emph{unlearning portion} \(\alpha\) specifies how much data needs to be unlearned with respect to the original dataset used by the model.
    \item\label{restriction-2} \(|\mathcal{F}|=|\mathcal{T}|\): The sizes of \(\mathcal{F}\) and \(\mathcal{T}\) are equal to avoid potential inductive biases.
\end{enumerate}
Under restrictions \labelcref{restriction-1} and \labelcref{restriction-2}, the size of \(\mathcal{R}\), \(\mathcal{F}\), and \(\mathcal{T}\) are determined, depending on \(\alpha\). We denote \(\mathcal{S}_{\alpha}\) as the finite collection of all possible dataset splits satisfying restriction \labelcref{restriction-1} and \labelcref{restriction-2} such that \(s \sim \mathcal{U}(\mathcal{S}_{\alpha})\)\footnote{Throughout the paper, \(\mathcal{U}(\cdot)\) denotes the uniform distribution.} is in the form of \(s = (\mathcal{R}, \mathcal{F}, \mathcal{T})\), where the tuple is ordered by the retain, forget, and test set. After splitting \(\mathcal{D}\) according to \(s\), a \emph{random oracle} \(\mathcal{O}_s(b)\) is then constructed according to \(s\) and the sensitivity distribution \(\mathbb{P}_{\mathcal{D}}\), together with a secret bit \(b \in \{0, 1\}\). The intuition of this random oracle is that it offers the ``two scenarios'' we mentioned in \Cref{sec:preliminaries}, respectively specified by \(b=0\) and \(b=1\): when the oracle \(\mathcal{O}_s(b)\) is called, it emits a data point \(x \sim \mathcal{O}_s(b)\) sampled from either \(\mathcal{F}\) (when \(b = 0\)) or \(\mathcal{T}\) (when \(b = 1\)), where the sampling probability is respect to \(\mathbb{P}_{\mathcal{D}}\).

We make some remarks on the role of the sensitivity distribution \(\mathbb{P}_{\mathcal{D}}\), as it seems opaque at first glance. Intuitively, \(\mathbb{P}_{\mathcal{D}}\) captures biases stemming from various origins, such that more sensitive data will have greater sampling probability, hence greater privacy risks. For instance, if the forget set comprises data that users request to delete, with some being more sensitive than others, a corresponding bias should be incorporated into the game. In particular, we tailor our random oracle \(\mathcal{O}_s(b)\) to sample data according to \(\mathbb{P}_{\mathcal{D}}\), so when the adversary engages with the oracle, it gains increased exposure to more sensitive data, compelling the challenger to unlearn such data more effectively, thereby necessitating a heightened level of defense.

\paragraph{Challenger Phase.}
The challenger is given the retain set \(\mathcal{R}\), the forget set \(\mathcal{F}\), and a learning algorithm \(\Learn\) (takes a dataset and outputs a learned model) and a unlearning algorithm \(\Unlearn\) (takes the original model and a training subset to unlearn, and outputs an unlearned model). For simplicity, we denote the challenger as \(\Unlearn\), as the unlearning algorithm is the component under evaluation. The goal of the challenger is to unlearn \(\mathcal{F}\) by \(\Unlearn\) from the model trained with \(\Learn\) on \(\mathcal{R} \cup \mathcal{F}\). Intuitively, for an ideal \(\Unlearn\), for any \(x \in \mathcal{F} \cup \mathcal{T}\), it is statistically impossible to decide whether \(x \in \mathcal{F}\) or \(x \in \mathcal{T}\) given accesses to the unlearned model \(m \coloneqq \Unlearn(\Learn(\mathcal{R}\cup \mathcal{F}),\mathcal{F})\). As both \(\Learn\) and \(\Unlearn\) can be randomized, \(m\) follows a distribution \(\mathbb{P}_{\mathcal{M}}(\Unlearn, s)\) depending on the split \(s\) and \(\Unlearn\), where \(\mathcal{M}\) denotes the set of all possible models. This distribution \(\mathbb{P}_{\mathcal{M}}(\Unlearn, s)\) summarizes the result of the challenger.

\paragraph{Adversary Phase.}
The adversary \(\mathcal{A}\) is an (efficient) algorithm that has access to the unlearned model \(m = \Unlearn(\Learn(\mathcal{R} \cup \mathcal{F}), \mathcal{F})\) and the random oracle \(\mathcal{O} = \mathcal{O}_s(b)\), where both \(s\) and \(b\) are unknown to \(\mathcal{A}\). The goal of the adversary is to guess \(b \in \{0, 1\}\) by interacting with \(m\) and \(\mathcal{O}\), i.e., after interacting with \(\mathcal{O}\) and \(m\), decide whether the data points from \(\mathcal{O} \) are from \(\mathcal{F}\) or \(\mathcal{T}\). Notation-wise, we write \(\mathcal{A}^{\mathcal{O}}(m) \mapsto \{0, 1\}\). Note that in one play of the game, \(\mathcal{O}\) is fixed as either \(\mathcal{O}_s(0)\) or \(\mathcal{O}_s(1)\) but will not switch between \(b=0\) and \(b=1\).

\subsection{Advantage and unlearning quality}
By viewing the unlearning sample inference game as a cryptographic game, with the discussion in \Cref{sec:preliminaries}, the corresponding \emph{advantage} can be defined as follows:
\begin{definition}[Advantage]\label{def:Adv}
    Given an unlearning sample inference game \(\mathcal{G} = (\Unlearn, \mathcal{A}, \mathcal{D}, \mathbb{P}_{\mathcal{D}}, \alpha)\), the \emph{advantage} of \(\mathcal{A}\) against \(\Unlearn\) is defined as
    \[
        \Adv(\mathcal{A}, \Unlearn)
        =  \frac{1}{|\mathcal{S}_{\alpha}|} \left\vert \sum_{s\in\mathcal{S}_{\alpha}} \Pr\nolimits_{\substack{m \sim \mathbb{P}_{\mathcal{M}}(\Unlearn, s) \\ \mathcal{O} = \mathcal{O}_s(0)}} (\mathcal{A}^{\mathcal{O}}(m) = 1 ) \right.
        \quad \left.- \sum_{s\in\mathcal{S}_{\alpha}} \Pr\nolimits_{\substack{m \sim \mathbb{P}_{\mathcal{M}}(\Unlearn, s)                               \\ \mathcal{O} = \mathcal{O}_s(1)}} (\mathcal{A}^{\mathcal{O}}(m) = 1 ) \right\vert.
    \]
\end{definition}
To simplify notation, we sometimes omit \(m\sim\mathbb{P}_{\mathcal{M}}(\Unlearn, s)\) and substitute \(\mathcal{O}_s(b)\) to the superscript of \(\mathcal{A}\) when it's clear, i.e., we can write \(\Pr(\mathcal{A}^{\mathcal{O}_s(b)}(m) = 1)\). With the definition of advantage, measuring the quality of the challenger \(\Unlearn\) is standard by considering the worst-case guarantee:

\begin{definition}[Unlearning Quality]\label{def:quality}
    For any unlearning algorithm \(\Unlearn\), its \emph{Unlearning Quality} under an unlearning sample inference game \(\mathcal{G}\) is defined as
    \[
        \mathcal{Q}(\Unlearn) \coloneqq 1 - \sup\nolimits_{\mathcal{A}} \Adv(\mathcal{A}, \Unlearn),
    \]
    where the supermum is over all efficient adversary \(\mathcal{A}\).
\end{definition}
Our definition of advantage (Unlearning Quality) has several theoretical merits as detailed below.

\paragraph{Zero Grounding for \(\Retrain\).}
Consider \(\Unlearn\) being the gold-standard unlearning method, i.e., the retraining method \(\Retrain\) where \(\Retrain(\Learn(\mathcal{R} \cup \mathcal{F}), \mathcal{F}) = \Learn(\mathcal{R})\). Since the forget set \(\mathcal{F}\) and the test set \(\mathcal{T}\) are all unforeseen data to retrained models trained only on \(\mathcal{R}\), one should expect \(\Retrain\) to defend any adversary \(\mathcal{A}\) perfectly, leading to a zero advantage. The following \Cref{thm:zero-grounding} shows that our definition of advantage in \Cref{def:Adv} indeed achieves such a desirable zero grounding property.
\begin{theorem}[Zero Grounding]\label{thm:zero-grounding}
    For any adversary \(\mathcal{A}\), \(\Adv(\mathcal{A}, \Retrain) = 0\) where \(\Retrain\) is the retraining method. Hence, \(\mathcal{Q}(\Retrain) = 1\).
\end{theorem}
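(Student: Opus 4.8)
The plan is to exploit a symmetry of the game that is special to $\Retrain$: since $\Retrain(\Learn(\mathcal{R}\cup\mathcal{F}),\mathcal{F}) = \Learn(\mathcal{R})$, the unlearned model depends on a split $s = (\mathcal{R},\mathcal{F},\mathcal{T})$ only through $\mathcal{R}$, and in particular not at all on which of the two equal-sized sets is labeled ``forget'' and which is labeled ``test.'' This invites pairing each split with its ``swapped'' counterpart.

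First I would define the map $\sigma : \mathcal{S}_\alpha \to \mathcal{S}_\alpha$ sending $s = (\mathcal{R},\mathcal{F},\mathcal{T})$ to $\sigma(s) = (\mathcal{R},\mathcal{T},\mathcal{F})$ and check that it is well defined: restriction \labelcref{restriction-2} gives $|\mathcal{T}| = |\mathcal{F}|$, so the forget/test sizes are preserved, and restriction \labelcref{restriction-1} is symmetric in $\mathcal{F}$ and $\mathcal{T}$ because $|\mathcal{F}| = |\mathcal{T}|$ forces $|\mathcal{T}|/|\mathcal{R}\cup\mathcal{T}| = |\mathcal{F}|/|\mathcal{R}\cup\mathcal{F}| = \alpha$. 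Since $\sigma$ is an involution it is in particular a bijection of $\mathcal{S}_\alpha$ onto itself.

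Next I would verify the two distributional identities that hold for $\Unlearn = \Retrain$ and every $s$: (i) $\mathbb{P}_{\mathcal{M}}(\Retrain, s) = \mathbb{P}_{\mathcal{M}}(\Retrain, \sigma(s))$, since both are the law of $\Learn(\mathcal{R})$ (this covers randomized $\Learn$ automatically); and (ii) $\mathcal{O}_s(0) = \mathcal{O}_{\sigma(s)}(1)$ as random oracles, since the former emits $x \sim \mathbb{P}_{\mathcal{D}}$ restricted to $\mathcal{F}$ while the latter emits $x\sim\mathbb{P}_{\mathcal{D}}$ restricted to the test set of $\sigma(s)$, which is again $\mathcal{F}$. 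Because the adversary $\mathcal{A}$ is a fixed (possibly randomized) algorithm interacting only with the pair $(m,\mathcal{O})$, (i) and (ii) yield $\Pr(\mathcal{A}^{\mathcal{O}_s(0)}(m) = 1) = \Pr(\mathcal{A}^{\mathcal{O}_{\sigma(s)}(1)}(m) = 1)$ for every $s$, with $m$ drawn from the appropriate model distribution on each side.

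Finally I would substitute into \Cref{def:Adv}: reindexing the first sum by $s \mapsto \sigma(s)$ and invoking the displayed identity transforms it into precisely the second sum, so the quantity inside the absolute value vanishes and $\Adv(\mathcal{A},\Retrain) = 0$ for every $\mathcal{A}$. Taking the supremum over $\mathcal{A}$ and plugging into \Cref{def:quality} gives $\mathcal{Q}(\Retrain) = 1 - 0 = 1$. The one point requiring care is (ii): one should argue that relabeling $\mathcal{F}\leftrightarrow\mathcal{T}$ induces an identity of the entire oracle processes — i.e., of the full joint distribution of all answers the adversary could extract across queries — not merely of the per-query marginals; once that is established the remainder is bookkeeping.
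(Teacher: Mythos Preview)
Your proposal is correct and follows essentially the same approach as the paper: both exploit the swap involution \(s=(\mathcal{R},\mathcal{F},\mathcal{T})\mapsto (\mathcal{R},\mathcal{T},\mathcal{F})\), use that \(\mathbb{P}_{\mathcal{M}}(\Retrain,\cdot)\) depends only on \(\mathcal{R}\) together with \(\mathcal{O}_s(0)=\mathcal{O}_{\sigma(s)}(1)\), and then cancel the two sums. The paper phrases this as an explicit pairing within each block \(\mathcal{S}_\alpha[\mathcal{R}]\) rather than a global reindexing by \(\sigma\), but the content is identical.
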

The proof of \Cref{thm:zero-grounding} can be found in \Cref{adxsubsec:proof-of-thm:zero-grounding}.

At a high level, the zero grounding property of the advantage is due to its symmetry---we measure the difference between \(\mathcal{O}_s(0)\) and \(\mathcal{O}_s(1)\) across all possible splits in \(\mathcal{S}_{\alpha}\), such that each data point has the same chance to appear in both the forget set \(\mathcal{F}\) and the test set \(\mathcal{T}\). In comparison to conventional MIA-based evaluation that only measures the MIA performance on a single data split, this symmetry guarantees that all MIA adversaries have a zero advantage on \(\Retrain\) even if the MIA is biased for certain data points, as the bias will be canceled out between symmetric splits that put these data points in \(\mathcal{F}\) and \(\mathcal{T}\) respectively.

\paragraph{Guarantee Under Certified Removal.}
We establish an upper bound on the advantage using the well-established notion of \emph{certified removal}~\citep{guo2020certified}, which is inspired by differential privacy~\citep{dwork2006differential}:
\begin{definition}[Certified Removal~\citep{guo2020certified}; Informal]\label{def:certified-removal-short}
    For a fixed dataset \(\mathcal{D} \), let \(\Learn\) and \(\Unlearn\) be learning \& unlearning algorithm respectively, and denote \(\mathcal{H}\) to be the hypothesis class containing all possible models that can be produced by \(\Learn\) and \(\Unlearn\). Then, for any \(\epsilon, \delta > 0\), the unlearning algorithm \(\Unlearn\) is said to be \emph{\((\epsilon, \delta)\)-certified removal} if for any \(\mathcal{W} \subset \mathcal{H}\) and for \textbf{any} disjoint \(\mathcal{R}, \mathcal{F} \subseteq \mathcal{D}\)
    \[
        \begin{dcases}
            \Pr(\Unlearn(\Learn(\mathcal{R} \cup \mathcal{F}), \mathcal{F}) \in \mathcal{W})
            \leq e^{\epsilon}{\Pr(\Retrain(\Learn(\mathcal{R} \cup \mathcal{F}), \mathcal{F}) \in \mathcal{W})} +\delta; \\
            \Pr(\Retrain(\Learn(\mathcal{R} \cup \mathcal{F}), \mathcal{F}) \in \mathcal{W})
            \leq e^{\epsilon}{\Pr(\Unlearn(\Learn(\mathcal{R} \cup \mathcal{F}), \mathcal{F}) \in \mathcal{W})} +\delta.
        \end{dcases}
    \]
\end{definition}

Given its root in differential privacy\footnote{In fact, it has been shown that a model with differential privacy guarantees automatically enjoys certified removal guarantees for any training data point~\citep{guo2020certified}.}, certified removal has been widely accepted as a rigorous measure of the goodness of approximate unlearning methods~\citep{neel2021descent,chien2023efficient}, where smaller \(\epsilon\) and \(\delta\) indicate better unlearning. However, in practice, it is difficult to empirically quantify \((\epsilon, \delta)\) for most approximate unlearning methods.

In the following \Cref{thm:Adv-certified}, we provide a lower bound for the proposed Unlearning Quality for an \((\epsilon, \delta)\)-certified removal unlearning algorithm, showing the close theoretical connection between the proposed Unlearning Quality metric and certified removal, while the proposed metric is easier to measure empirically.
\begin{theorem}[Guarantee Under Certified Removal]\label{thm:Adv-certified}
    Given an \((\epsilon, \delta)\)-certified removal unlearning algorithm \(\Unlearn\) with some \(\epsilon, \delta > 0\), for any adversary \(\mathcal{A}\) against \(\Unlearn\), we have \(\Adv(\mathcal{A}, \Unlearn) \leq 2 \cdot ( 1 - \frac{2 - 2 \delta}{e^{\epsilon} + 1})\). Hence, \(\mathcal{Q}(\Unlearn) \geq \frac{4 - 4 \delta}{e^{\epsilon} + 1} -1\).
\end{theorem}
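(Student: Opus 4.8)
The plan is to bound $\Adv(\mathcal{A},\Unlearn)$ for an arbitrary adversary $\mathcal{A}$ by comparing $\Unlearn$ against $\Retrain$ split-by-split: certified removal controls the per-split discrepancy, and \Cref{thm:zero-grounding} annihilates the $\Retrain$ contribution. First I would fix a split $s=(\mathcal{R},\mathcal{F},\mathcal{T})\in\mathcal{S}_{\alpha}$ and a bit $b$, and set $g^{s}_{b}(m)\coloneqq\Pr(\mathcal{A}^{\mathcal{O}_s(b)}(m)=1)$, the probability that $\mathcal{A}$ outputs $1$ given the fixed model $m$ and oracle access to $\mathcal{O}_s(b)$, where the probability is over the oracle's sampling and $\mathcal{A}$'s coins only. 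This is a measurable map $\mathcal{M}\to[0,1]$ depending on $s$ and $b$ but \emph{not} on which algorithm produced $m$. Since the model is drawn independently of the oracle and of $\mathcal{A}$'s randomness, the tower rule gives $\Pr_{m\sim\mathbb{P}_{\mathcal{M}}(\Unlearn,s),\,\mathcal{O}=\mathcal{O}_s(b)}(\mathcal{A}^{\mathcal{O}}(m)=1)=\mathbb{E}_{m\sim\mathbb{P}_{\mathcal{M}}(\Unlearn,s)}[g^{s}_{b}(m)]$, and similarly with $\Retrain$. Writing $P_s\coloneqq\mathbb{P}_{\mathcal{M}}(\Unlearn,s)$ and $Q_s\coloneqq\mathbb{P}_{\mathcal{M}}(\Retrain,s)=\Learn(\mathcal{R})$, I would decompose, for each $s$,
\[
\mathbb{E}_{P_s}[g^{s}_{0}]-\mathbb{E}_{P_s}[g^{s}_{1}]=\bigl(\mathbb{E}_{P_s}[g^{s}_{0}]-\mathbb{E}_{Q_s}[g^{s}_{0}]\bigr)+\bigl(\mathbb{E}_{Q_s}[g^{s}_{0}]-\mathbb{E}_{Q_s}[g^{s}_{1}]\bigr)-\bigl(\mathbb{E}_{P_s}[g^{s}_{1}]-\mathbb{E}_{Q_s}[g^{s}_{1}]\bigr),
\]
sum over $s\in\mathcal{S}_{\alpha}$, and note that $\sum_{s}\bigl(\mathbb{E}_{Q_s}[g^{s}_{0}]-\mathbb{E}_{Q_s}[g^{s}_{1}]\bigr)=0$, since this is exactly the quantity inside the absolute value of $\Adv(\mathcal{A},\Retrain)$, which is $0$ by \Cref{thm:zero-grounding}. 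Therefore $\Adv(\mathcal{A},\Unlearn)\le\frac{1}{|\mathcal{S}_{\alpha}|}\sum_{s}\bigl(|\mathbb{E}_{P_s}[g^{s}_{0}]-\mathbb{E}_{Q_s}[g^{s}_{0}]|+|\mathbb{E}_{P_s}[g^{s}_{1}]-\mathbb{E}_{Q_s}[g^{s}_{1}]|\bigr)$.

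The core step is a sharp hypothesis-testing lemma: if two distributions $P,Q$ over $\mathcal{M}$ satisfy $P(\mathcal{W})\le e^{\epsilon}Q(\mathcal{W})+\delta$ and $Q(\mathcal{W})\le e^{\epsilon}P(\mathcal{W})+\delta$ for every measurable $\mathcal{W}$, then $|\mathbb{E}_{P}[g]-\mathbb{E}_{Q}[g]|\le 1-\frac{2-2\delta}{e^{\epsilon}+1}$ for every measurable $g\colon\mathcal{M}\to[0,1]$. By \Cref{def:certified-removal-short} this applies with $P=P_s$, $Q=Q_s$ (taking $\mathcal{R},\mathcal{F}$ in the definition to be the retain and forget sets of $s$) and $g=g^{s}_{b}$. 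To prove the lemma I would first reduce from $g$ to indicators $g=\mathbbm{1}_{\mathcal{W}}$ — the extreme points of $[0,1]$-valued functions, over which $P\mapsto\mathbb{E}_P[g]$ is affine — and then, by symmetry, bound $P(\mathcal{W})-Q(\mathcal{W})$ from above. Applying the certified-removal inequality to $\mathcal{W}$ gives $P(\mathcal{W})-Q(\mathcal{W})\le(e^{\epsilon}-1)Q(\mathcal{W})+\delta$, while applying the reverse-direction inequality to the complement $\mathcal{W}^{c}$ gives $P(\mathcal{W})-Q(\mathcal{W})\le(1-e^{-\epsilon})(1-Q(\mathcal{W}))+e^{-\epsilon}\delta$. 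The first bound increases and the second decreases in $Q(\mathcal{W})$, so $P(\mathcal{W})-Q(\mathcal{W})$ is at most the value at the crossing point $Q(\mathcal{W})=\frac{1-\delta}{e^{\epsilon}+1}$, which a short computation shows equals $\frac{e^{\epsilon}-1+2\delta}{e^{\epsilon}+1}=1-\frac{2-2\delta}{e^{\epsilon}+1}$.

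Plugging the lemma into the displayed bound, each of the $2|\mathcal{S}_{\alpha}|$ summands is at most $1-\frac{2-2\delta}{e^{\epsilon}+1}$, so $\Adv(\mathcal{A},\Unlearn)\le 2\bigl(1-\frac{2-2\delta}{e^{\epsilon}+1}\bigr)$. Since the bound is uniform over all adversaries (the argument is information-theoretic and does not use efficiency), taking the supremum gives $\mathcal{Q}(\Unlearn)=1-\sup_{\mathcal{A}}\Adv(\mathcal{A},\Unlearn)\ge 1-2\bigl(1-\frac{2-2\delta}{e^{\epsilon}+1}\bigr)=\frac{4-4\delta}{e^{\epsilon}+1}-1$, as claimed.

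I expect the main obstacle to be pinning down the sharp constant in the hypothesis-testing lemma: the naive argument (using certified removal only on $\mathcal{W}$, or bounding total variation directly) yields the weaker $\frac{e^{\epsilon}-1}{e^{\epsilon}+1}+\delta$, and recovering the stated $1-\frac{2-2\delta}{e^{\epsilon}+1}$ (which is smaller, as $e^{\epsilon}+1\ge 2$) requires the complement trick together with the crossing-point optimization. A secondary point needing care is the measure-theoretic factorization of the game probabilities into $\mathbb{E}_m[g^{s}_{b}(m)]$ and the verification that \Cref{thm:zero-grounding} applies to $g^{s}_{b}$ verbatim — which it does, since that theorem holds for arbitrary adversaries and $g^{s}_{b}$ is precisely the adversary's acceptance probability viewed as a function of the model.
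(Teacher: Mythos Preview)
Your proposal is correct and follows essentially the same route as the paper: decompose $\Adv(\mathcal{A},\Unlearn)$ by inserting $\Retrain$ (which vanishes by \Cref{thm:zero-grounding}), then bound each per-split term $|\mathbb{E}_{P_s}[g^s_b]-\mathbb{E}_{Q_s}[g^s_b]|$ via the two-sided certified-removal inequalities and the crossing-point optimization to get $\tau=1-\frac{2-2\delta}{e^\epsilon+1}$. The only cosmetic difference is that the paper phrases the core lemma in terms of type-I/type-II errors $\alpha,\beta$ (with $|1-\alpha-\beta|\le\tau$) and constructs auxiliary adversaries $\mathcal{A}_1,\mathcal{A}_2$ for the $H_1$-vs-$H_2$ game, whereas you work directly with $\mathbb{E}_P[g]-\mathbb{E}_Q[g]$ and reduce to indicators---but these are the same argument in different notation.
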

The formal definition of certified removal and the proof of \Cref{thm:Adv-certified} can be found in \Cref{adxsubsec:proof-of-thm:Adv-certified}.

\subsection{\emph{SWAP} test}\label{sec:SWAP-test}
Direct calculation requires enumerating dataset splits in \(\mathcal{S}_{\alpha}\), which is computationally infeasible. Hence, we propose a simple approximation scheme named the \emph{SWAP} test, which requires as few as two dataset splits to approximate the advantage and still preserves desirable properties as the original definition. The idea is to consider the \emph{swap pair} between a forget set \(\mathcal{F} \) and a test set \(\mathcal{T} \). Specifically, pick a random split \(s = (\mathcal{R}, \mathcal{F}, \mathcal{T}) \in \mathcal{S}_{\alpha}\) and calculate the term corresponding to \(s\) in \Cref{def:Adv}:
\[
    \Adv_{s}(\mathcal{A}, \Unlearn)
    \coloneqq \Pr\nolimits_{m \sim \mathbb{P}_{\mathcal{M}}(\Unlearn, s)} (\mathcal{A}^{\mathcal{O}_s(0)}(m) = 1 )  - \Pr\nolimits_{m \sim \mathbb{P}_{\mathcal{M}}(\Unlearn, s)} (\mathcal{A}^{\mathcal{O}_s(1)}(m) = 1 ) .
\]
Next, \emph{swap} \(\mathcal{F}\) and \(\mathcal{T}\) in \(s\) to get \(s' = (\mathcal{R}, \mathcal{T}, \mathcal{F})\), and calculate its corresponding term in \Cref{def:Adv}:
\[
    \Adv_{s'}(\mathcal{A}, \Unlearn)
    \coloneqq \Pr\nolimits_{m \sim \mathbb{P}_{\mathcal{M}}(\Unlearn, s')} (\mathcal{A}^{\mathcal{O}_{s'}(0)}(m) = 1)  - \Pr\nolimits_{m \sim \mathbb{P}_{\mathcal{M}}(\Unlearn, s') }  (\mathcal{A}^{\mathcal{O}_{s'}(1)}(m) = 1 ).
\]
Finally, average the two advantages above and obtain
\[
    \overline{\Adv}_{\{s, s'\}}(\mathcal{A}, \Unlearn)
    \coloneqq \frac{\left\vert \Adv_{s}(\mathcal{A}, \Unlearn) + \Adv_{s'}(\mathcal{A}, \Unlearn) \right\vert}{2}.
\]
In essence, we approximate \Cref{def:Adv} by replacing \(\mathcal{S}_{\alpha}\) with \(\{s, s'\}\). Note that the \emph{SWAP} test relies on the restriction \labelcref{restriction-2} to be valid, i.e., \(|\mathcal{F}|=|\mathcal{T}|\).

\paragraph{\emph{SWAP} Test versus Random Splits.}
The key insight is that the \emph{SWAP} test reserves the symmetry in the original definition of advantage, and as shown in \Cref{prop:SWAP-zero-grounding} (see \Cref{rmk:SWAP-zero-grounding} for proof), it still grounds the advantage of any adversary \(\mathcal{A}\) against \(\Retrain\) to zero, preserving the same theoretical guarantees as \Cref{thm:zero-grounding}.

\begin{proposition}[Zero Grounding of \emph{SWAP} Test (Informal)]\label{prop:SWAP-zero-grounding}
    For any adversary \(\mathcal{A}\) and swap splits \(s, s' \in \mathcal{S}_{\alpha}\), \(\overline{\Adv}_{\{s, s'\}}(\mathcal{A}, \Retrain) = 0\).
\end{proposition}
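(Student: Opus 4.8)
The plan is to show that the two swap terms are exact negatives of one another, so that $\Adv_{s}(\mathcal{A},\Retrain) + \Adv_{s'}(\mathcal{A},\Retrain) = 0$ and hence $\overline{\Adv}_{\{s,s'\}}(\mathcal{A},\Retrain) = \lvert 0\rvert/2 = 0$. Both facts needed are immediate consequences of the defining identity $\Retrain(\Learn(\mathcal{R}\cup\mathcal{F}),\mathcal{F}) = \Learn(\mathcal{R})$. First I would record that $s' = (\mathcal{R},\mathcal{T},\mathcal{F})$ is again a valid split in $\mathcal{S}_{\alpha}$: this is exactly where restriction \labelcref{restriction-2} enters, since $|\mathcal{F}| = |\mathcal{T}|$ gives $|\mathcal{T}|/|\mathcal{R}\cup\mathcal{T}| = |\mathcal{F}|/|\mathcal{R}\cup\mathcal{F}| = \alpha$, so $s'$ still satisfies restrictions \labelcref{restriction-1} and \labelcref{restriction-2}.

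The two key observations are: (i) the retrained-model distribution ignores the forget/test labelling, because $s$ and $s'$ share the retain set $\mathcal{R}$ and $\Retrain$ only sees $\mathcal{R}$; hence $\mathbb{P}_{\mathcal{M}}(\Retrain, s) = \mathbb{P}_{\mathcal{M}}(\Retrain, s')$, both equal to the law of $\Learn(\mathcal{R})$; and (ii) swapping $\mathcal{F}$ and $\mathcal{T}$ swaps the two branches of the random oracle, i.e.\ $\mathcal{O}_{s'}(0)$ is identical in distribution to $\mathcal{O}_{s}(1)$ and $\mathcal{O}_{s'}(1)$ to $\mathcal{O}_{s}(0)$, because in both splits $\mathcal{O}_{\cdot}(0)$ samples from the ``forget slot'' and $\mathcal{O}_{\cdot}(1)$ from the ``test slot,'' each by restricting and renormalizing $\mathbb{P}_{\mathcal{D}}$ to that set. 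Substituting (i) and (ii) into the definition of $\Adv_{s'}(\mathcal{A},\Retrain)$ then gives $\Adv_{s'}(\mathcal{A},\Retrain) = \Pr(\mathcal{A}^{\mathcal{O}_{s}(1)}(m) = 1) - \Pr(\mathcal{A}^{\mathcal{O}_{s}(0)}(m) = 1) = -\Adv_{s}(\mathcal{A},\Retrain)$, which is what we want.

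I do not expect a genuine obstacle: the statement is a bookkeeping consequence of the symmetry the game was engineered to have, mirroring the proof of \Cref{thm:zero-grounding} but restricted to a single swap pair rather than all of $\mathcal{S}_{\alpha}$. The one point deserving care---and the place a referee might push---is making observation (ii) rigorous: one must state explicitly that $\mathcal{O}_s(b)$ depends on the tuple $s$ only through which of the sets $\mathcal{F},\mathcal{T}$ the bit $b$ selects, with no dependence on the ordering of the tuple, and that $\mathcal{A}$ receives $m$ together with the oracle and no side information about $s$ or $b$, so that $\Pr(\mathcal{A}^{\mathcal{O}}(m) = 1)$ is a function of the pair of distributions $(\mathbb{P}_{\mathcal{M}}(\Unlearn,s),\,\text{law of }\mathcal{O})$ alone. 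Once that is spelled out, the identification $\mathcal{O}_{s'}(b) \stackrel{d}{=} \mathcal{O}_{s}(1-b)$ and the substitution above are legitimate, and the proposition follows.
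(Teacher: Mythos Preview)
Your proposal is correct and matches the paper's own proof essentially line for line: the paper also notes that $\mathbb{P}_{\mathcal{M}}(\Retrain,s)=\mathbb{P}_{\mathcal{M}}(\Retrain,s')$ (depending only on $\mathcal{R}$) and that $\mathcal{O}_{s'}(b)=\mathcal{O}_{s}(1-b)$, then expands $\overline{\Adv}_{\{s,s'\}}(\mathcal{A},\Retrain)$ into four terms that cancel pairwise. Your extra remarks on why $s'\in\mathcal{S}_\alpha$ and on the oracle's dependence on $s$ only add rigor to the same argument.
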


On the contrary, naively taking two random splits with non-empty overlap can lead to an adversary with high advantage against \(\Retrain\):
\begin{proposition}[High Advantage Under Random Splits]\label{prop:zero-grounding}
    For any two splits \(s_1, s_2 \in \mathcal{S}_{\alpha}\) satisfying a moderate non-degeneracy assumption, there's an efficient deterministic adversary \(\mathcal{A}\) such that \(\overline{\Adv}_{\{s_1, s_2\}}(\mathcal{A}, \Unlearn) = 1\) for \textbf{any} unlearn method \(\Unlearn\). Particularly, \(\overline{\Adv}_{\{s_1, s_2\}}(\mathcal{A}, \Retrain) = 1\).
\end{proposition}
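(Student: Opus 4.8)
The plan is to exhibit one fixed, deterministic, and efficient adversary that never queries the unlearned model at all, so that its advantage is literally the same number for every unlearning algorithm $\Unlearn$ (in particular for $\Retrain$). I would hard-code the two splits $s_1=(\mathcal{R}_1,\mathcal{F}_1,\mathcal{T}_1)$ and $s_2=(\mathcal{R}_2,\mathcal{F}_2,\mathcal{T}_2)$ into $\mathcal{A}$; on input $m$ and oracle $\mathcal{O}$, let $\mathcal{A}$ ignore $m$, call $\mathcal{O}$ once to obtain a point $x$, and output $1$ iff $x\in\mathcal{F}_1\cup\mathcal{F}_2$. This is a single oracle call plus membership tests in two finite sets, hence efficient; and since $m$ is unused, $\Pr(\mathcal{A}^{\mathcal{O}_{s_i}(b)}(m)=1)$ does not depend on the model distribution $\mathbb{P}_{\mathcal{M}}(\Unlearn,s_i)$, which is what will make the final bound uniform over all $\Unlearn$.

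Next I would evaluate the four probabilities appearing in $\Adv_{s_1}$ and $\Adv_{s_2}$. Under split $s_1$: if $b=0$ the point $x$ is drawn from $\mathbb{P}_{\mathcal{D}}$ restricted to $\mathcal{F}_1$, so $x\in\mathcal{F}_1\cup\mathcal{F}_2$ surely and $\mathcal{A}$ outputs $1$; if $b=1$ the point $x$ lies in $\mathcal{T}_1$, which is disjoint from $\mathcal{F}_1$, and the non-degeneracy assumption forces $x\notin\mathcal{F}_2$ with probability one, so $\mathcal{A}$ outputs $0$. Hence $\Adv_{s_1}(\mathcal{A},\Unlearn)=1-0=1$, and symmetrically $\Adv_{s_2}(\mathcal{A},\Unlearn)=1$ (this time invoking that no $\mathbb{P}_{\mathcal{D}}$-positive point lies in both $\mathcal{F}_1$ and $\mathcal{T}_2$). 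Plugging into the two-split estimator gives $\overline{\Adv}_{\{s_1,s_2\}}(\mathcal{A},\Unlearn)=\tfrac12|1+1|=1$ for every $\Unlearn$, and specializing to $\Retrain$ yields the last sentence. The precise hypothesis I would use is the ``moderate'' condition $\mathbb{P}_{\mathcal{D}}(\mathcal{F}_1\cap\mathcal{T}_2)=\mathbb{P}_{\mathcal{D}}(\mathcal{F}_2\cap\mathcal{T}_1)=0$ (together with the standing requirement that each $\mathcal{F}_i,\mathcal{T}_i$ carries positive $\mathbb{P}_{\mathcal{D}}$-mass so the oracle is well defined); a quick case analysis over the at most nine cells obtained by intersecting the $s_1$-partition with the $s_2$-partition shows that $\mathcal{F}_1\cap\mathcal{T}_2$ and $\mathcal{F}_2\cap\mathcal{T}_1$ are exactly the cells on which the label $\mathcal{A}$ ``should'' output under $s_1$ disagrees with the one it ``should'' output under $s_2$, so this is the minimal thing one must exclude.

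I expect the main obstacle to be formulating the hypothesis rather than the computation: the construction breaks precisely when a single data point can surface both as a forget sample under one split and as a test sample under the other, so the statement has to carry a condition strong enough to kill those two bad cells yet weak enough to deserve the name ``moderate''. It is worth contrasting this with \Cref{prop:SWAP-zero-grounding}: a swap pair $s=(\mathcal{R},\mathcal{F},\mathcal{T})$, $s'=(\mathcal{R},\mathcal{T},\mathcal{F})$ \emph{maximally} violates the condition, since the forget set of $s$ is the test set of $s'$, which is exactly why the \emph{SWAP} test is immune to this attack, whereas two independently drawn splits generically satisfy the condition and are therefore wide open to it. The only other point needing a line of care is the claim's scope ``for any $\Unlearn$'': it holds only because $\mathcal{A}$ discards $m$, so that a degenerate $\Unlearn$ (say, one outputting a fixed model) cannot be leveraged to argue the model leaks which of $s_1,s_2$ is active — and nothing in the argument needs it to.
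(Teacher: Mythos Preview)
Your construction is internally correct: the single-query adversary that outputs $1$ iff $x\in\mathcal{F}_1\cup\mathcal{F}_2$ does yield $\overline{\Adv}_{\{s_1,s_2\}}=1$ \emph{provided} the two ``bad'' cells $\mathcal{F}_1\cap\mathcal{T}_2$ and $\mathcal{F}_2\cap\mathcal{T}_1$ carry zero $\mathbb{P}_{\mathcal{D}}$-mass, and since $m$ is ignored the bound is indeed uniform in $\Unlearn$. So as a proof under \emph{your} hypothesis, nothing is wrong.

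The discrepancy is that the paper's ``moderate non-degeneracy'' is a different condition, and your reading of it leads to a factual slip. The paper assumes that $\at{\mathbb{P}_{\mathcal{D}}}{\mathcal{F}_i}{}(\mathcal{F}_1\cap\mathcal{F}_2)$ and $\at{\mathbb{P}_{\mathcal{D}}}{\mathcal{T}_i}{}(\mathcal{T}_1\cap\mathcal{T}_2)$ are non-negligible (do not vanish faster than $1/\mathrm{poly}(|\mathcal{D}|)$), and its adversary repeatedly queries $\mathcal{O}$ until the sample lands in $\mathcal{F}_1\cap\mathcal{F}_2$ (output $0$) or $\mathcal{T}_1\cap\mathcal{T}_2$ (output $1$); the non-degeneracy bounds the expected number of calls. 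This works regardless of how much mass sits in $\mathcal{F}_1\cap\mathcal{T}_2$ or $\mathcal{F}_2\cap\mathcal{T}_1$---the adversary simply discards those samples. Your claim that ``two independently drawn splits generically satisfy the condition'' is false for your condition: with $|\mathcal{F}_i|=|\mathcal{T}_i|=\Theta(\alpha|\mathcal{D}|)$ and constant $\alpha$, a birthday-type count gives $\Pr(\mathcal{F}_1\cap\mathcal{T}_2=\varnothing)\to 0$, whereas the paper's condition (that $\mathcal{F}_1\cap\mathcal{F}_2$ and $\mathcal{T}_1\cap\mathcal{T}_2$ are large) is the one that holds for typical random pairs. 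This matters because the proposition's purpose is to indict random two-split evaluation; under your hypothesis it indicts only a measure-zero set of pairs.

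The comparison is therefore a genuine trade: your adversary is a \emph{weak} (single-call) adversary, which is a nicer object, but the price is a hypothesis that almost never holds in the regime of interest; the paper's adversary needs multiple oracle calls, but its hypothesis is truly mild. Both constructions correctly exclude the SWAP pair (yours because $\mathcal{F}_1\cap\mathcal{T}_2=\mathcal{F}$ is full, the paper's because $\mathcal{F}_1\cap\mathcal{F}_2=\varnothing$).
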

The full statement and the proof of \Cref{prop:zero-grounding} can be found in \Cref{adxsubsec:proof-of-prop:zero-grounding}.

\begin{remark}[Offsetting MIA Accuracy/AUC for \(\Retrain\)]
    One may wonder whether we could achieve zero grounding by simply offsetting the MIA accuracy for \(\Retrain\) to zero (or offsetting the MIA AUC to 0.5). In \Cref{adxsubsec:beyond-naive-normalization}, we provide a discussion on why this strategy will lead to pathological cases for measuring unlearning performance.
\end{remark}

\subsection{Practical implementation}\label{sec:practical-implementation}
While the proposed \emph{SWAP} test significantly reduces the computational cost for evaluating the advantage of an adversary, evaluating the Unlearning Quality is still challenging since:
\begin{enumerate*}[label=\arabic*.)]
    \item most of the state-of-the-art MIAs do not exploit the covariance between data points;
    \item it is impossible to solve the supremum in \Cref{def:quality} exactly.
\end{enumerate*}
We will start by addressing the first challenge.

\paragraph{Weak Adversary.}
As the current state-of-the-art MIAs make independent decisions on each data point~\citep{bertran2023scalable, 7958568, carlini2022membership} without considering their covariance, therefore, for empirical analysis, we accommodate our unlearning sample inference game by restricting the adversary's knowledge such that it can only interact with the oracle once. We call such a adversary as \emph{weak adversary} \(\mathcal{A}_{\text{w}}\), which will first learn a binary classifier \(f(\cdot)\) by interacting with \(m\), and output its prediction of \(b\) as \(f(x)\) by querying the oracle \(\mathcal{O} = \mathcal{O}_s(b)\) \emph{exactly once} to obtain \(x \sim \mathcal{O}\), where both \(s\) and \(b\) are unknown to \(\mathcal{A}_{\text{w}}\). In this case, its advantage can be defined as
\[
    \Adv(\mathcal{A}_{\text{w}}, \Unlearn)
    = \frac{1}{|\mathcal{S}_{\alpha}|} \left\vert \sum_{s\in\mathcal{S}_{\alpha}} \Pr_{\substack{m \sim \mathbb{P}_{\mathcal{M}}(\Unlearn, s) \\ x \sim \mathcal{O}_s(0)}}(\mathcal{A}_{\text{w}}(m, x) = 1) \right. \left.- \sum_{s\in\mathcal{S}_{\alpha}} \Pr_{\substack{m \sim \mathbb{P}_{\mathcal{M}}(\Unlearn, s)                                \\ x \sim \mathcal{O}_s(1)}}(\mathcal{A}_{\text{w}}(m, x) = 1) \right\vert
\]
and the Unlearning Quality now becomes \(\mathcal{Q} \coloneqq 1 - \sup_{\mathcal{A}_{\text{w}}} \Adv(\mathcal{A}_{\text{w}}, \Unlearn)\),  analogously to \Cref{def:Adv,def:quality}. These new definitions are subsumed under the original paradigm since the only difference is the number of interactions with the oracle.

\paragraph{Approximating the Supremum.}
While it is impossible to solve the supremum in \Cref{def:quality} exactly, a plausible interpretation is that the supremum is \emph{approximately} solved by the adversary, as most of the state-of-the-art MIA adversaries are formulated as end-to-end optimization problems~\citep{bertran2023scalable}. By assuming these MIA adversaries are trying to maximize the advantage when constructing the final classifier \(f(\cdot)\) and that the search space is large enough to parameterize all the possible weak adversaries of our interests, we can interpret that the supermum is approximately solved. Moreover, in practice, one can refine the estimation of the supremum by selecting the most potent among multiple state-of-the-art MIA adversaries.
\section{Experiment}\label{sec:experiment}
In this section, we provide empirical evidence of the effectiveness of the proposed evaluation framework. In what follows, for brevity, we will use \emph{SWAP} test to refer to the proposed practical approximations for calculating the proposed evaluation metric, which in reality is a combination of the \emph{SWAP} test in \Cref{sec:SWAP-test} and other approximations discussed in \Cref{sec:practical-implementation}. We further denote \(\mathcal{Q}\) as the proposed metric, Unlearning Quality, calculated by the \emph{SWAP} test. With these notations established, our goal is to validate the theoretical results, demonstrate additional observed benefits of the proposed Unlearning Quality metric, and ultimately show that it outperforms other attack-based evaluation metrics. More details can be found in \Cref{adxsec:experiment}. Furthermore, due to space limit, we conduct additional experiments in \Cref{adxsubsec:additional-experiments}, where we compare different unlearning algorithms' Unlearning Quality across different dataset sizes, unlearning portion parameters, datasets, and model architectures, attacks, and also a linear setting with small privacy budgets to verify our theory.

\subsection{Experiment settings}\label{sec:experiment-setting}
We focus on one of the most common tasks in the machine unlearning literature, image classification, and perform experiments on the CIFAR10 dataset~\citep{krizhevsky2009learning}, which is licensed under CC-BY 4.0. Moreover, we opt for ResNet~\citep{he2016deep} as the \emph{target model} produced by some learning algorithms \(\Learn\), whose details can be found in \Cref{adxsubsec:detail-of-training}. Finally, the following is the setup of the unlearning sample inference game \(\mathcal{G} = (\mathcal{A}, \Unlearn, \mathcal{D}, \mathbb{P}_\mathcal{D}, \alpha)\) for the evaluation experiment:
\begin{itemize}[leftmargin=*,topsep=0pt]
    \item \textbf{Initialization}: Since some MIA adversaries require training the so-called \emph{shadow models} using data sampled from the same distribution of the training data used by the target model~\citep{7958568}, we start by splitting the whole dataset to accommodate the training of shadow models. In particular, we split the given dataset into two halves, one for training the target model (which we call the \emph{target dataset}), and the other for training shadow models for some MIAs. The target dataset is what we denoted as \(\mathcal{D}\) in the game. To initialize the game, we consider a uniform sensitivity distribution \(\mathbb{P}_{\mathcal{D}} = \mathcal{U}(\mathcal{D})\) since we do not have any prior preference for the data. The unlearning portion parameter is set to be \(\alpha = 0.1\) unless specified. This implies \(\mathcal{O}_s(0) = \mathcal{U}(\mathcal{F})\) and \(\mathcal{O}_s(1) = \mathcal{U}(\mathcal{T})\), where \(s = (\mathcal{R}, \mathcal{F}, \mathcal{T}) \in \mathcal{S}_\alpha\) is the split we choose to use for the game.
    \item \textbf{Challenger Phase}: As mentioned at the beginning of the section, we choose the learning algorithm \(\Learn\) which outputs ResNet as the target model. On the other hand, the corresponding unlearning algorithms \(\Unlearn\) we select for comparison are:
          \begin{enumerate*}[label=\arabic*.)]
              \item \(\Retrain\): retrain from scratch (the gold-standard);
              \item \(\Fisher\): Fisher forgetting~\citep{9157084};
              \item \(\Ftfinal\): fine-tuning final layer~\citep{goel2023adversarial};
              \item \(\Retrfinal\): retrain final layer~\citep{goel2023adversarial};
              \item \(\NegGrad\): negative gradient descent~\citep{9157084};
              \item \(\SalUN\): saliency unlearning~\citep{fan2024salun};
              \item \(\SSD\): selective synaptic dampening~\citep{foster2024fast};
              \item \(\None\): identity (no unlearning, dummy baseline).
          \end{enumerate*}
    Among them, \(\Retrain\) is the gold standard for exact unlearning while \(\None\) is a dummy baseline for reference. All other methods are approximate unlearning methods, with \(\SSD\) being the most recent state-of-the-art methods.
    \item \textbf{Adversary Phase}: Since we're unaware of any non-weak MIAs, we focus on the following SOTA black-box (weak) MIA adversaries \(\mathcal{A} \) to approximate the advantage:
          \begin{enumerate*}[label=\arabic*.)]
              \item shadow model-based~\citep{7958568};
              \item correctness-based, confidence-based, modified entropy~\citep{song2020systematic}.
          \end{enumerate*}
\end{itemize}

\subsection{Ablation study}\label{subsec:ablation-study}
We first provide some empirical evidence that our approximation is reasonable and effective through two lenses: dataset size and \(\alpha\).
\paragraph{Unlearning Quality Versus Dataset Size.}
We vary the dataset size to assess how the unlearning quality metric~\(\mathcal{Q}\) scales with data. Experiments are conducted with~\(\alpha = 0.1\), and results are shown in~\Cref{tab:unlearning-dataset-size}. The relative ranking of unlearning methods remains consistent across sizes, indicating that small-scale evaluations can reliably approximate large-scale performance. This scalability makes~\(\mathcal{Q}\) an efficient and practical metric for benchmarking unlearning algorithms. The retraining method maintains~\(\mathcal{Q} \approx 1\), further confirming the theoretical grounding in~\Cref{thm:zero-grounding}.

\begin{table}[H]
    \centering
    \caption{Unlearning Quality \emph{versus} dataset size \(\eta\) (in percentage). The relative ranking of different unlearning methods with added standard deviations.}
    \label{tab:unlearning-dataset-size}
    \begin{tabular}{lcccc}
        \toprule
        \multirow{2}{*}{\(\Unlearn\)} & \multicolumn{4}{c}{\(\eta\)}                                                                   \\\cline{2-5}
                                      & \(0.1\)                      & \(0.4\)             & \(0.8\)             & \(1.0\)             \\
        \midrule
        \(\Retrfinal\)                & \(0.340 \pm 0.017\)          & \(0.586 \pm 0.015\) & \(0.621 \pm 0.014\) & \(0.634 \pm 0.025\) \\
        \(\Ftfinal\)                  & \(0.131 \pm 0.011\)          & \(0.585 \pm 0.016\) & \(0.619 \pm 0.014\) & \(0.634 \pm 0.024\) \\
        \(\Fisher\)                   & \(0.751 \pm 0.024\)          & \(0.679 \pm 0.005\) & \(0.734 \pm 0.006\) & \(0.791 \pm 0.020\) \\
        \(\NegGrad\)                  & \(0.124 \pm 0.010\)          & \(0.564 \pm 0.018\) & \(0.603 \pm 0.014\) & \(0.656 \pm 0.035\) \\
        \(\SalUN\)                    & \(0.476 \pm 0.014\)          & \(0.617 \pm 0.016\) & \(0.689 \pm 0.013\) & \(0.748 \pm 0.004\) \\
        \(\SSD\)                      & \(0.975 \pm 0.008\)          & \(0.939 \pm 0.025\) & \(0.929 \pm 0.021\) & \(0.928 \pm 0.015\) \\
        \(\Retrain\)                  & \(0.999 \pm 0.000\)          & \(0.997 \pm 0.001\) & \(0.993 \pm 0.001\) & \(0.993 \pm 0.001\) \\
        \bottomrule
    \end{tabular}
\end{table}

\paragraph{Unlearning Quality Versus \(\alpha\).}
We further vary the unlearning portion parameter~\(\alpha\), defined as the ratio of the forget set to the full training set, to investigate how the unlearning quality metric~\(\mathcal{Q}\) changes. The results are presented in~\Cref{tab:adx-alpha}. We observe that the relative ranking of different unlearning methods remains largely consistent across varying~\(\alpha\) values. This stability suggests that our metric reliably captures the comparative performance of unlearning algorithms under different data removal proportions.

\begin{table}[H]
    \centering
    \caption{Unlearning Quality \emph{versus} \(\alpha\). The relative ranking of different unlearning methods stays mostly consistent across different \(\alpha\).}
    \begin{tabular}{lcccc}
        \toprule
        \multirow{2}{*}{\(\Unlearn\)} & \multicolumn{4}{c}{\(\alpha\)}                                                                   \\\cline{2-5}
                                      & \(0.1\)                        & \(0.25\)            & \(0.4\)             & \(0.67\)            \\
        \midrule
        \(\Retrfinal\)                & \(0.513 \pm 0.054\)            & \(0.489 \pm 0.055\) & \(0.413 \pm 0.027\) & \(0.500 \pm 0.024\) \\
        \(\Ftfinal\)                  & \(0.511 \pm 0.054\)            & \(0.484 \pm 0.054\) & \(0.394 \pm 0.041\) & \(0.479 \pm 0.020\) \\
        \(\Fisher\)                   & \(0.904 \pm 0.046\)            & \(0.871 \pm 0.044\) & \(0.908 \pm 0.038\) & \(0.810 \pm 0.050\) \\
        \(\NegGrad\)                  & \(0.637 \pm 0.105\)            & \(0.757 \pm 0.073\) & \(0.684 \pm 0.016\) & \(0.631 \pm 0.088\) \\
        \(\SalUN\)                    & \(0.755 \pm 0.017\)            & \(0.762 \pm 0.043\) & \(0.895 \pm 0.047\) & \(0.893 \pm 0.028\) \\
        \(\SSD\)                      & \(0.944 \pm 0.038\)            & \(0.960 \pm 0.019\) & \(0.837 \pm 0.054\) & \(0.913 \pm 0.037\) \\
        \bottomrule
    \end{tabular}
    \label{tab:adx-alpha}
\end{table}

\subsection{Validation of theoretical results}
We empirically validate \Cref{thm:zero-grounding,thm:Adv-certified}. While it is easy to verify \emph{\textbf{grounding}}, i.e., \(\mathcal{Q}(\Retrain) = 1\), validating the lower-bound of \(\mathcal{Q}\) for unlearning algorithms with \((\epsilon, \delta)\)-certified removal guarantees is challenging since such algorithms are not known beyond convex models. However, if the model is trained with \((\epsilon, \delta)\)-differential privacy (DP) guarantees, then even if we do not apply any unlearning on the model (i.e., \(\Unlearn = \None\)), the model still automatically satisfies \((\epsilon, \delta)\)-certified removal for any training data point~\citep{guo2020certified}. As the DP algorithm exists for non-convex models~\citep{Abadi_2016}, this suggests that one can analyze the impact of the \emph{DP privacy budget} on \(\mathcal{Q}\) of an \((\epsilon, \delta)\)-DP model. In particular, we fix \(\delta = 10^{-5}\) and consider varying \(\epsilon\) to be \(50\), \(150\), \(600\), and \(\infty\) (\(\infty\) corresponds to no DP training). The corresponding Unlearning Quality results are reported in \Cref{tab:unlearning-quality}.

\begin{wraptable}[13]{r}{0.5\textwidth}
    \centering
    \vspace{-1\intextsep}
    \scriptsize
    \caption{Unlearning Quality \emph{versus} DP budgets. We use \textsuperscript{\textdagger{}} to indicate that the results' standard error of the mean is \(< 0.01\) and use \textsuperscript{*} for \(<0.005 \).}
    \label{tab:unlearning-quality}
    \begin{tabular}{lcccc}
        \toprule
        \multirow{2}{*}{\(\Unlearn\)} & \multicolumn{4}{c}{\(\epsilon\)}                                                                                                                          \\\cline{2-5}
                                      & 50                                   & 150                                  & 600                                  & \(\infty\)                           \\
        \midrule
        \(\None\)                     & 0.972\textsuperscript{\textdagger{}} & 0.960\textsuperscript{\textdagger{}}
                                      & 0.932\textsuperscript{*}             & 0.587\textsuperscript{\textdagger{}}                                                                               \\
        \(\NegGrad\)                  & 0.980\textsuperscript{*}             & 0.975\textsuperscript{\textdagger{}} & 0.953\textsuperscript{\textdagger{}} & 0.628\textsuperscript{\textdagger{}} \\
        \(\Retrfinal\)                & 0.972\textsuperscript{\textdagger{}} & 0.964\textsuperscript{\textdagger{}} & 0.939\textsuperscript{\textdagger{}} & 0.576\textsuperscript{\textdagger{}} \\
        \(\Ftfinal\)                  & 0.973\textsuperscript{*}             & 0.963\textsuperscript{\textdagger{}} & 0.939\textsuperscript{\textdagger{}} & 0.574\textsuperscript{\textdagger{}} \\
        \(\Fisher\)                   & 0.973\textsuperscript{*}             & 0.967\textsuperscript{\textdagger{}} & 0.942\textsuperscript{*}             & 0.709\textsuperscript{\textdagger{}} \\
        \(\SalUN\)                    & 0.979\textsuperscript{*}             & 0.972\textsuperscript{\textdagger{}} & 0.945\textsuperscript{*}             & 0.689\textsuperscript{*}             \\
        \(\SSD\)                      & 0.996\textsuperscript{*}             & 0.988\textsuperscript{*}             & 0.981\textsuperscript{\textdagger{}} & 0.888\textsuperscript{\textdagger{}} \\
        \(\Retrain\)                  & 0.998\textsuperscript{*}             & 0.996\textsuperscript{*}             & 0.997\textsuperscript{*}             & 0.993\textsuperscript{*}             \\
        \bottomrule
    \end{tabular}
\end{wraptable}

Firstly, as can be seen from the last row of \Cref{tab:unlearning-quality}, \(\mathcal{Q}(\Retrain) \approx 1\) with high precision for all \(\epsilon\), achieving \emph{\textbf{grounding}} almost perfectly, thus validating \Cref{thm:zero-grounding}. Furthermore, \Cref{thm:Adv-certified} suggests that the lower bound of \(\mathcal{Q}\) should \emph{\textbf{negatively correlate}} with \(\epsilon\). Indeed, empirically, we observe such a trend with high precision, again validating our theoretical findings. Moreover, \Cref{tab:unlearning-quality} shows that the Unlearning Quality also maintains a \emph{\textbf{consistent}} relative ranking between unlearning algorithms among different \(\epsilon\), proving its robustness. Finally, our metric suggests that \(\SSD\) significantly outperforms other unlearning methods, which is consistent with the fact that it is the most recent state-of-the-art method among the unlearning methods we evaluate.

\begin{remark}
    From \Cref{tab:unlearning-quality}, even for \(\None\) with \(\epsilon=\infty\), the Unlearning Quality \(\mathcal{Q}\) is still relatively high (\(0.587\)). This is partly because the current state-of-the-art (weak) MIA adversaries are not good enough: if the weak adversary becomes better, our evaluation metric can also benefit from this.
\end{remark}

\begin{remark}
We distinguish between A) the computation cost of the proposed unlearning evaluation metric and B) the cost of evaluating the metric itself. The latter is significantly higher due to DP training and is reported in the appendix. In real applications, only the scalable cost of A) is incurred.
\end{remark}
\subsection{Comparison to other metrics}\label{subsec:comparison-to-other-metrics}
\begin{table*}[htpb]
    \caption{Comparison between IC score and MIA score under different DP budgets. See \Cref{tab:unlearning-quality} for more context.}
    \label{tab:result}
    \scriptsize
    \begin{subtable}{0.48\textwidth}
        \centering
        
        \caption{IC score \emph{versus} DP budgets.}
        \label{tab:IC}
        \begin{tabular}{lcccc}
            \toprule
            \multirow{2}{*}{\(\Unlearn\)} & \multicolumn{4}{c}{\(\epsilon\)}                                                                                                                          \\\cline{2-5}
                                          & 50                                   & 150                                  & 600                                  & \(\infty\)                           \\
            \midrule
            \(\None\)                     & 0.749\textsuperscript{\textdagger{}} & 0.720\textsuperscript{\textdagger{}} & 0.717\textsuperscript{\textdagger{}} & 0.005\textsuperscript{*}             \\
            \(\NegGrad\)                  & 0.925\textsuperscript{*}             & 0.953\textsuperscript{\textdagger{}} & 0.946\textsuperscript{\textdagger{}} & 0.180\textsuperscript{\textdagger{}} \\
            \(\Retrfinal\)                & 0.931\textsuperscript{*}             & 0.958\textsuperscript{\textdagger{}} & 0.893\textsuperscript{\textdagger{}} & 0.033\textsuperscript{\textdagger{}} \\
            \(\Ftfinal\)                  & 0.930\textsuperscript{*}             & 0.957\textsuperscript{\textdagger{}} & 0.893\textsuperscript{\textdagger{}} & 0.346\textsuperscript{\textdagger{}} \\
            \(\Fisher\)                   & 0.743\textsuperscript{*}             & 0.720\textsuperscript{\textdagger{}} & 0.702\textsuperscript{\textdagger{}} & 0.344\textsuperscript{\textdagger{}} \\
            \(\SalUN\)                    & 0.866\textsuperscript{*}             & 0.887\textsuperscript{*}             & 0.841\textsuperscript{\textdagger{}} & 0.081\textsuperscript{*}             \\
            \(\SSD\)                      & 0.976\textsuperscript{*}             & 1.000\textsuperscript{*}             & 0.990\textsuperscript{\textdagger{}} & 0.174\textsuperscript{\textdagger{}} \\
            \(\Retrain\)                  & 0.962\textsuperscript{*}             & 0.972\textsuperscript{\textdagger{}} & 0.976\textsuperscript{\textdagger{}} & 0.975\textsuperscript{\textdagger{}} \\
            \bottomrule
        \end{tabular}
    \end{subtable}%
    \begin{subtable}{0.48\textwidth}
        \centering
        \caption{MIA score \emph{versus} DP budgets.}
        \label{tab:MIA-score}
        \begin{tabular}{lcccc}
            \toprule
            \multirow{2}{*}{\(\Unlearn\)} & \multicolumn{4}{c}{\(\epsilon\)}                                                                                                                          \\\cline{2-5}
                                          & 50                                   & 150                                  & 600                                  & \(\infty\)                           \\
            \midrule
            \(\None\)                     & 0.451\textsuperscript{\textdagger{}} & 0.433\textsuperscript{\textdagger{}} & 0.454\textsuperscript{\textdagger{}} & 0.380\textsuperscript{\textdagger{}} \\
            \(\NegGrad\)                  & 0.476\textsuperscript{\textdagger{}} & 0.482\textsuperscript{\textdagger{}} & 0.466\textsuperscript{\textdagger{}} & 0.299\textsuperscript{\textdagger{}} \\
            \(\Retrfinal\)                & 0.485\textsuperscript{\textdagger{}} & 0.485\textsuperscript{\textdagger{}} & 0.472\textsuperscript{\textdagger{}} & 0.248\textsuperscript{\textdagger{}} \\
            \(\Ftfinal\)                  & 0.485\textsuperscript{\textdagger{}} & 0.485\textsuperscript{\textdagger{}} & 0.472\textsuperscript{\textdagger{}} & 0.247\textsuperscript{\textdagger{}} \\
            \(\Fisher\)                   & 0.475\textsuperscript{\textdagger{}} & 0.484\textsuperscript{\textdagger{}} & 0.463\textsuperscript{\textdagger{}} & 0.325\textsuperscript{\textdagger{}} \\
            \(\SalUN\)                    & 0.488\textsuperscript{*}             & 0.491\textsuperscript{*}             & 0.477\textsuperscript{\textdagger{}} & 0.268\textsuperscript{*}             \\
            \(\SSD\)                      & 0.480\textsuperscript{*}             & 0.480\textsuperscript{*}             & 0.468\textsuperscript{\textdagger{}} & 0.244\textsuperscript{*}             \\
            \(\Retrain\)                  & 0.479\textsuperscript{*}             & 0.491\textsuperscript{*}             & 0.492\textsuperscript{*}             & 0.488\textsuperscript{*}             \\
            \bottomrule
        \end{tabular}
    \end{subtable}
\end{table*}
We compare our Unlearning Quality metric \(\mathcal{Q}\) to other existing attack-based evaluation metrics, demonstrating the superiority of the proposed metric. We limit our evaluation to the MIA-based evaluation, and within this category, three MIA-based evaluation metrics are most relevant to our setting~\citep{NeurIPSMUC,golatkar2021mixedprivacy,goel2023adversarial}. While none of them enjoy the \emph{\textbf{grounding}} property, in particular, the one proposed by \citet{NeurIPSMUC} requires training attacks for every forget data sample, which is extremely time-consuming, and we leave it out from the comparison and focus on comparing our metric with the other two.

The two metrics we will compare are the pure MIA AUC (Area Under Curve)~\citep{golatkar2021mixedprivacy} and the Interclass Confusion (IC) Test~\citep{goel2023adversarial}. The former is straightforward but falls short in many aspects as discussed in the introduction; the latter, on the other hand, is a more refined metric. In brief, the IC test ``confuses'' a selected set \(S\) of two classes by switching their labels and training a model on the modified dataset. It then requests the unlearning algorithm to unlearn \(S\) and measures the inter-class error \(\gamma \in [0, 1]\) of the unlearned model on \(S\). Similar to the advantage, \(\gamma\) is the ``flipped side'' of the unlearning performance, which suggests defining the \emph{IC score} by \(1 - \gamma\). Similarly, for the sake of clear comparison, we report the \emph{MIA score} defined as ``\(1 - \text{MIA AUC}\)'' as the unlearning performance, where the MIA AUC is calculated on the union of the forget set and test set. We leave the details to \Cref{adxsubsec:IC-score-and-MIA-score}.

We conduct the comparison experiment by again analyzing the relation between the \emph{DP privacy budget} versus the \emph{evaluation result} of an \((\epsilon, \delta)\)-DP model for the two metrics we are comparing with under the same setup as in \Cref{tab:unlearning-quality}. Specifically, we let \(\delta = 10^{-5}\) and consider varying \(\epsilon\) from \(50\) to \(\infty\), and we look into two aspects:
\begin{enumerate*}[label=\arabic*)]
    \item \emph{\textbf{negative correlation}} with \(\epsilon\);
    \item \emph{\textbf{consistency}} w.r.t.\ \(\epsilon\).
\end{enumerate*}
The results are shown in \Cref{tab:result}. Firstly, we see that according to \Cref{tab:IC}, the IC test \textbf{fails} to produce a \emph{\textbf{negatively correlated}} evaluation result with \(\epsilon\). For instance, the IC score for \(\NegGrad\) is notably lower at \(\epsilon = 50\) than at \(\epsilon = 150\), and \(\Retrfinal\) and \(\Ftfinal\) also demonstrate a higher IC score at \(\epsilon = 150\) than at \(\epsilon = 600\). Furthermore, in terms of \emph{\textbf{consistency}} w.r.t.\ \(\epsilon\), we again see that the IC test \textbf{fails} to satisfy this property, unlike the proposed Unlearning Quality metric \(\mathcal{Q}\). For example, while \(\None\) is better than \(\NegGrad\) at \(\epsilon= 50\), this relative ranking is not maintained at \(\epsilon = \infty\). Such an inconsistency happens multiple times across \Cref{tab:IC}. A similar story can be told for the MIA AUC, where from \Cref{tab:MIA-score}, we see that MIA AUC also \textbf{fails} to produce a similar trend as \(\mathcal{Q}\) where the evaluation results are \emph{\textbf{negatively correlated}} with \(\epsilon\). For instance, the MIA scores for \(\NegGrad\) and \(\Fisher\) are notably higher at \(\epsilon = 150\) than at \(\epsilon = 600\). Furthermore, in terms of \emph{\textbf{consistency}} w.r.t.\ \(\epsilon\), we see that while \(\NegGrad\) outperforms \(\Fisher\) at \(\epsilon = 50\) and \(\epsilon = 600\), it performs worse than \(\Fisher\) at \(\epsilon = 150\), which is \textbf{inconsistent}. Overall, both the IC test and the MIA AUC fail to satisfy the properties that we have established for the Unlearning Quality, demonstrating our superiority.
\section{Conclusion}\label{sec:conclusion}
In this work, we developed a game-theoretical framework named the \emph{unlearning sample inference game} and proposed a novel metric for evaluating the data removal efficacy of approximate unlearning methods. Our approach is rooted in the concept of ``advantage,'' borrowed from cryptography, to quantify the success of an MIA adversary in differentiating forget data from test data given an unlearned model. This metric enjoys zero grounding for the theoretically optimal retraining method, scales with the privacy budget of certified unlearning methods, and can take advantage (as opposed to suffering from conflicts) of various MIA methods, which are desirable properties that existing MIA-based evaluation metrics fail to satisfy. We also propose a practical tool --- the \emph{SWAP} test --- to efficiently approximate the proposed metric. Our empirical findings reveal that the proposed metric effectively captures the nuances of machine unlearning, demonstrating its robustness across varying dataset sizes and its adaptability to the constraints of differential privacy budgets. The ability to maintain a discernible difference and a partial order among unlearning methods, regardless of dataset size, highlights the practical utility of our approach. By bridging theoretical concepts with empirical analysis, our work lays a solid foundation for reliable empirical evaluation of machine unlearning and paves the way for the development of more effective unlearning algorithms.

\bibliographystyle{plainnat}
\bibliography{references}


\newpage

\newpage
\appendix
\section{Additional related work}\label{adxsec:additional-related-work}
\paragraph{Machine Unlearning.}
Techniques like data sharding~\citep{bourtoule2021machine, Chen_2022} partition the training process in such a way that only a portion of the model needs to be retrained when removing a subset of the dataset, reducing the computational burden compared to retraining the entire model.  For example, \citet{guo2020certified, neel2021descent, chien2023efficient} analyzed the influence of removed data on linear or convex models and proposed gradient-based updates on model parameters to remove this influence. \citet{chourasia2023forget} proposed an unlearning method that appears similar to ours but differs in key aspects, particularly in the definition of advantage, a term whose meaning varies by threat model. Their threat model leads to a criterion for unlearning effectiveness, but like theory-based approaches such as certified removal, it is hard to empirically evaluate for most approximate methods lacking guarantees. In contrast, our threat model is tailored to enable a practical and novel evaluation metric, addressing a key gap in unlearning research.

\paragraph{Retraining-based Evaluation.}
Generally, retraining-based evaluation seeks to compare unlearned models to retrained models. As introduced in the works by \citet{golatkar2021mixedprivacy, he2021deepobliviate, 9157084}, model accuracy on the forget set should be similar to the accuracy on the test set as if the forget set never exists in the training set. \citet{peste2021ssse} proposed an evaluation metric based on the normalized confusion matrix element-wise difference on selected data samples. \citet{golatkar2021mixedprivacy} proposed using relearn time, which is the additional time to use for unlearned models to perform comparably to retrained models. The authors also proposed to measure the \(\ell _1\) distance between the final activations of the scrubbed weights and the retrained model. \citet{wu2020deltagrad,izzo2021approximate} turned to \(\ell _2\) distance of weight parameters between unlearned models and retrained models. In general, beyond the need for additional implementation and the lower computational efficiency inherent in retraining-based evaluations, a more critical issue is the influence of random factors. As discussed by \citet{cretu2023re}, such random factors, including the sequence of data batches and the initial configuration of models, can lead to the unaligned storage of information within models. This misalignment may foster implicit biases favoring certain retrained models.

\paragraph{Theory-based Evaluation.}
Some literature tries to characterize data removal efficacy by requiring a strict theoretical guarantee for the unlearned models. However, these methods have strong model assumptions, such as convexity or linearity, or require inefficient white-box access to target models, thus limiting their applicability in practice. For example, \citet{guo2020certified,neel2021descent,chien2023efficient} focus on the notion of the certified removal (CR), which requires that the unlearned model cannot be statistically distinguished from the retrained model. By definition, CR is parametrized by privacy parameters called \emph{privacy budgets}, which quantify the level of statistical indistinguishability. Hence, models with CR guarantees will intrinsically satisfy an ``evaluation metric'' induced by the definition of CR, acting as a form of ``evaluation.'' On the other hand, \citet{becker2022evaluating} adopted an information-theoretical perspective and turned to epistemic uncertainty to evaluate the information remaining after unlearning. A concurrent study~\citep{brimhall2025mirrormirrorwalli} that appeared later than this paper proposes a framework for evaluating machine unlearning called computational unlearning, inspired by cryptographic indistinguishability games between retrained models and unlearned models. Their framework shares similar motivations to ours but has different technical assumptions.

\paragraph{Attack-based Evaluation.}
Since attacks are the most direct way to interpret privacy risks, attack-based evaluation is a common metric in unlearning literature. The classical approach is to directly calculate the MIA accuracy using various kinds of MIAs~\citep{graves2020amnesiac, NEURIPS2023_062d711f}.
One kind of MIA utilizes shadow models~\citep{7958568}, which are trained with the same model structure as the original models but on a shadow dataset sampled with the same data sampling distribution. Moreover, some MIAs calculate membership scores based on correctness and confidence~\citep{song2020systematic}. Some evaluation metrics do move beyond the vanilla MIA accuracy. For example, \citet{NeurIPSMUC} leveraged hypothesis testing coupled with MIAs to compute an estimated privacy budget for each unlearning method, which gives a rather rigorous estimation of unlearning efficacy. \citet{hayes2024inexact} proposed a novel MIA towards machine unlearning based on Likelihood Ratio Attack and evaluated machine unlearning through a combination of the predicted membership probability and the \emph{balanced} MIA accuracy on test and forget sets. They designed a new MIA attack with a similar attack-defense game framework. There are other evaluation metrics also based on MIAs, but with different focuses. However, as they still use MIA accuracy as the evaluation metric, the game itself doesn't bring much for their evaluation framework other than a clear experiment procedure. \citet{goel2023adversarial} proposed an Interclass Confusion (IC) test that manipulates the input dataset to evaluate both model indistinguishability and property generalization. However, their metric is less direct in terms of interpreting real-life privacy risks. Lastly,  For example, \citet{chen2021machine} proposed a novel metric based on MIAs that know both learned and unlearned models with a focus on how much information is deleted rather than how much information is left after the unlearning process. \citet{DBLP:journals/corr/abs-2003-04247} provided a backdoor verification mechanism for Machine Learning as a Service (MLaaS), which benefits an individual user valuing his/her privacy to verify the efficacy of unlearning. They focus more on user-level verification rather than model-level evaluation.

\section{Omitted details from \texorpdfstring{\Cref{sec:proposed-evaluation-framework}}{Section 3}}\label{adxsec:proposed-evaluation-framework}
\subsection{More details of advantage}\label{sec:advantage}
In cryptographic games, there are two interacting players: a benign player named \emph{challenger} representing the cryptographic protocol under evaluation (corresponding to the unlearning algorithm in our context), and an \emph{adversary} attempts to compromise the security properties of the challenger. The game proceeds in several phases, including an initialization phase where the game is initialized with specific configuration parameters, a challenger phase where the challenger performs the cryptographic protocol, and an adversary phase where the adversary queries allowed by the game's rules and generates a guess of the secret protected by the challenger. Finally, the game concludes with a win or a loss for the adversary, depending on their guess. In the context of machine unlearning, the goal of the adversary is to guess whether certain given data comes from the set to be unlearned (the forget set) or the set never used in training (the test set), based on access to the unlearned model.

The notion of advantage quantifies, in probabilistic terms, how effectively an adversary can win the game when it is played repeatedly. It is often defined as the difference in the adversary's accepting rate between two distinct scenarios (e.g., with or without access to information potentially leaked by the cryptographic protocol)~\citep{katz2007introduction}. In the context of machine unlearning, the two scenarios can refer to the data given to the adversary coming from either the forget set or the test set, respectively. The game is constructed such that, if the cryptographic protocol is perfectly secure (i.e., the unlearned model has completely erased the information of the forget set), the adversary’s advantage is expected to be zero, making it a well-calibrated measure of the protocol's security.

\subsection{Design choices}\label{adxsec:design-choices}
In this section, we justify some of our design choices when designing the unlearning sample inference game. Most of them are of practical consideration, while some are for the convenience of analysis.

\paragraph{Uniform Splitting.}
At the initialization phase, we choose the split uniformly rather than allowing sampling from an arbitrary distribution. The reason is two-fold: Firstly, since this sampling strategy corresponds to the so-called \emph{i.i.d.\ unlearning} setup~\citep{qu2023learn}, i.e., the unlearned samples will be drawn from a distribution of \(\mathcal{D}\) in an i.i.d.\ fashion. In this regard, uniformly splitting the dataset corresponds to a uniform distribution of \(\mathcal{D}\) for the unlearned samples to be drawn from. This is the most commonly used sampling strategy when evaluating unlearning algorithms since it's difficult to estimate the probability that data will be requested to be unlearned.

Secondly, \citet{qu2023learn} acknowledged the significantly greater difficulty of non-i.i.d.\ unlearning compared to i.i.d.\ unlearning empirically. A classic example of non-i.i.d.\ unlearning is the process of unlearning an entire class of data, where a subset of data shares the same label. Conversely, even non-uniform splitting complicates the analysis, leading to the breakdown of our theoretical results. Specifically, generalizing both \Cref{thm:zero-grounding} and \Cref{thm:Adv-certified} becomes non-trivial. Overall, non-uniform splitting presents obstacles both empirically and theoretically.

\paragraph{Intrinsic Learning Algorithm for Challenger.}
The challenger, which we denote as \(\Unlearn\), \emph{has a learning algorithm \(\Learn\) in mind} in our formulation. This is because the existing theory-based unlearning method, such as the certified removal algorithm~\citep{guo2020certified} as defined in \Cref{def:certified-removal}, is achieved by a combination of the learning algorithm and a corresponding unlearning method to support unlearning request with theoretical guarantees. In other words, given an arbitrary learning algorithm \(\Learn\), it's unlikely to design an efficient unlearning algorithm \(\Unlearn\) with strong theoretical guarantees, at least this is not captured by the notion of certified removal. Hence, allowing the challenger to choose its learning algorithm accommodates this situation.

\paragraph{Black-box Adversary v.s.\ White-box Adversary.}
By default, we assume that \(m\) is given to \(\mathcal{A}\) in a \emph{black-box} fashion, i.e., \(\mathcal{A}\) only has oracle access to \(m\). However, our framework can also adapt to white-box adversaries, which requires full model parameters of \(m\). The only difference is that the efficiency definition changes accordingly, i.e., polynomial time in the size of \(|\mathcal{D}|\) for a black-box adversary or polynomial time in the number of parameters of \(m\) for a white-box adversary.

\paragraph{Strong adversary in practice.}
As discussed in \Cref{sec:practical-implementation}, the current state-of-the-art MIA adversaries are all \emph{weak}. While it is possible to formulate the unlearning sample inference game entirely with the weak adversary, we discuss the rationale behind considering the strong adversary. One of the apparent reasons is simply that the strong adversary encompasses the weak adversary, thereby enhancing the generality of our framework and theory. However, we argue that the strong adversary is more practical in many real-world scenarios, and bringing this stronger notion has further practical impacts beyond blindly generalizing our model.

Consider a scenario where an adversary conducts a membership inference attack on a large scale. We argue that in practice, it is more reasonable to aim for a high \emph{overall membership accuracy} of a set of carefully chosen data points, rather than the individual membership status of each of them. For example, consider the case where we are interested in images sourced from the internet. In this case, it is safe to assume that images within the same webpage are either all included in the model training dataset or none are, if we assume that the training data is collected via some reasonable data mining algorithms. In such a case, the ability of an adversary to infer the common membership for a group of data points from a particular webpage becomes desirable as it is likely to enhance the overall MIA accuracy. We believe that this stronger notion of MIA adversary has more practical impacts and reflects the common practice when deploying the membership inference attack; therefore, we choose to formulate the unlearning sample inference game with it.

\subsection{Proof of \texorpdfstring{\Cref{thm:zero-grounding}}{Theorem 3.4}}\label{adxsubsec:proof-of-thm:zero-grounding}
We now prove \Cref{thm:zero-grounding}. We repeat the statement for convenience.

\begin{theorem}
    For any (potentially inefficient) adversary \(\mathcal{A}\), its advantage against the retraining method \(\Retrain\) in an unlearning sample inference game \(\mathcal{G} = (\Retrain, \mathcal{A}, \mathcal{D}, \mathbb{P}_{\mathcal{D}}, \alpha)\) is zero, i.e., \(\Adv(\mathcal{A}, \Retrain) = 0\).
\end{theorem}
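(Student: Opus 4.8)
The plan is to exploit the symmetry of the retraining method together with the symmetry built into the definition of the advantage. The crucial observation is that $\Retrain(\Learn(\mathcal{R}\cup\mathcal{F}),\mathcal{F}) = \Learn(\mathcal{R})$, so the distribution $\mathbb{P}_{\mathcal{M}}(\Retrain, s)$ of the unlearned model depends only on the retain set $\mathcal{R}$ — it is completely insensitive to how the remaining data $\mathcal{D}\setminus\mathcal{R}$ is partitioned into $\mathcal{F}$ and $\mathcal{T}$. This is the key structural fact that the retrained model ``forgets'' which half is the forget set versus the test set.

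First I would set up a pairing (an involution) on the collection of splits $\mathcal{S}_\alpha$. Given $s = (\mathcal{R},\mathcal{F},\mathcal{T}) \in \mathcal{S}_\alpha$, define its swap partner $\sigma(s) = (\mathcal{R},\mathcal{T},\mathcal{F})$; by restriction \labelcref{restriction-2} we have $|\mathcal{F}| = |\mathcal{T}|$, so $\sigma(s)$ again lies in $\mathcal{S}_\alpha$, and $\sigma$ is a bijection of $\mathcal{S}_\alpha$ onto itself with $\sigma\circ\sigma = \mathrm{id}$. Next I would observe the two identities that make the argument go through: (i) $\mathbb{P}_{\mathcal{M}}(\Retrain, s) = \mathbb{P}_{\mathcal{M}}(\Retrain, \sigma(s))$, because both equal the law of $\Learn(\mathcal{R})$; and (ii) the oracle swap identity $\mathcal{O}_{\sigma(s)}(0) = \mathcal{O}_s(1)$ and $\mathcal{O}_{\sigma(s)}(1) = \mathcal{O}_s(0)$, since $\mathcal{O}_s(0)$ samples from $\mathcal{F}$ and $\mathcal{O}_s(1)$ from $\mathcal{T}$ (with probabilities induced by $\mathbb{P}_{\mathcal{D}}$), and swapping $\mathcal{F}\leftrightarrow\mathcal{T}$ interchanges these two. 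Combining (i) and (ii) gives, for every $s$, the per-split equality
\[
\Pr\nolimits_{m\sim\mathbb{P}_{\mathcal{M}}(\Retrain,s),\ \mathcal{O}=\mathcal{O}_s(0)}(\mathcal{A}^{\mathcal{O}}(m)=1) \;=\; \Pr\nolimits_{m\sim\mathbb{P}_{\mathcal{M}}(\Retrain,\sigma(s)),\ \mathcal{O}=\mathcal{O}_{\sigma(s)}(1)}(\mathcal{A}^{\mathcal{O}}(m)=1).
\]
Summing the left-hand side over all $s\in\mathcal{S}_\alpha$ and reindexing the right-hand side by $s' = \sigma(s)$ (a bijection) shows that $\sum_{s} \Pr_{\mathcal{O}=\mathcal{O}_s(0)}(\mathcal{A}^{\mathcal{O}}(m)=1) = \sum_{s'} \Pr_{\mathcal{O}=\mathcal{O}_{s'}(1)}(\mathcal{A}^{\mathcal{O}}(m)=1)$, so the two sums inside the absolute value in \Cref{def:Adv} are equal, whence $\Adv(\mathcal{A},\Retrain)=0$. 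The claim $\mathcal{Q}(\Retrain)=1$ is then immediate from \Cref{def:quality}.

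I do not expect any serious obstacle here; the argument is essentially a bookkeeping computation once the involution $\sigma$ and identities (i)–(ii) are in place. The one point that needs a little care is making sure the sampling-probability structure of the oracle (weights according to $\mathbb{P}_{\mathcal{D}}$ over $\mathcal{F}$ or $\mathcal{T}$) is genuinely respected by the swap — i.e., that $\mathbb{P}_{\mathcal{D}}$ restricted and renormalized to $\mathcal{F}$ becomes exactly the $b=1$ oracle of $\sigma(s)$ after the swap, which it does since $\mathbb{P}_{\mathcal{D}}$ is a fixed distribution on $\mathcal{D}$ independent of the split. A second minor subtlety is that $\mathcal{A}$ may be adaptive and randomized with multiple oracle queries; but since in a single play the oracle is frozen at $\mathcal{O}_s(b)$, the whole interaction transcript of $\mathcal{A}^{\mathcal{O}_s(0)}(m)$ against a model $m$ drawn from $\mathbb{P}_{\mathcal{M}}(\Retrain,s)$ is distributed identically to that of $\mathcal{A}^{\mathcal{O}_{\sigma(s)}(1)}(m)$ against $m$ drawn from $\mathbb{P}_{\mathcal{M}}(\Retrain,\sigma(s))$, so the acceptance probabilities match verbatim and the argument needs no independence assumption on $\mathcal{A}$.
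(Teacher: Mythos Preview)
Your proposal is correct and follows essentially the same approach as the paper: both exploit the swap involution $s=(\mathcal{R},\mathcal{F},\mathcal{T})\mapsto(\mathcal{R},\mathcal{T},\mathcal{F})$, note that $\mathbb{P}_{\mathcal{M}}(\Retrain,s)$ depends only on $\mathcal{R}$, use the oracle swap identity $\mathcal{O}_s(b)=\mathcal{O}_{\sigma(s)}(1-b)$, and conclude that the two sums in \Cref{def:Adv} coincide. The paper organizes the pairing by first partitioning $\mathcal{S}_\alpha$ according to the retain set $\mathcal{R}$ and then canceling within each pair, whereas you reindex the whole sum via the bijection $\sigma$; these are the same argument.
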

\begin{proof}
    Firstly, we may partition the collection of all the possible dataset splits \(\mathcal{S}_{\alpha}\) by fixing the retain sets \(\mathcal{R} \subset \mathcal{D}\). Specifically, denote the collection of dataset splits with the retain set to be \(\mathcal{R}\) as \(\mathcal{S}_{\alpha}[\mathcal{R}] \coloneqq \{s\in \mathcal{S}_{\alpha} \colon s = (\mathcal{R}, \cdot, \cdot)\}\). With the usual convention, when there's no dataset split corresponds to \(\mathcal{R}\), \(\mathcal{S}_{\alpha}[\mathcal{R}] = \varnothing\). Observe that for any \(s \in \mathcal{S}_{\alpha}[\mathcal{R}]\), we can pair it up with another dataset split that \emph{swaps} the forget and test sets in \(s\). In other words, for any \(s = (\mathcal{R}, \mathcal{F}, \mathcal{T}) \in \mathcal{S}_{\alpha}[\mathcal{R}]\), we see that \((\mathcal{R}, \mathcal{T}, \mathcal{F})\) is also in \(\mathcal{S}_{\alpha}[\mathcal{R}]\) since we assume \(|\mathcal{F}| = |\mathcal{T}|\), every dataset split will be paired. In addition, since \(\mathcal{R}\) is fixed in \(\mathcal{S}_{\alpha}[\mathcal{R}]\), we know that \(\mathbb{P}_{\mathcal{M}}(\Retrain, s) \eqqcolon \mathbb{P}_{\mathcal{M}}(\mathcal{R})\) is the same for all \(s \in \mathcal{S}_{\alpha}[\mathcal{R}]\) since the unlearning algorithm \(\Unlearn\) is \(\Retrain\), i.e., it only depends on \(\mathcal{R}\). With these observations, we can then combine the paired dataset splits within the expectation.

    Specifically, for any \(s \in \mathcal{S}_{\alpha}[\mathcal{R}]\), let \(s' \in \mathcal{S}_{\alpha}[\mathcal{R}]\) to be \(s\)'s pair, i.e., if \(s = (\mathcal{R}, \mathcal{F}, \mathcal{T})\) for some \(\mathcal{F}\) and \(\mathcal{T}\), then \(s' = (\mathcal{R}, \mathcal{T}, \mathcal{F})\). Finally, for a given \(\mathcal{R}\), let's denote the collection of all such pairs as
    \[
        \mathcal{P}_{\mathcal{R}}
        \coloneqq \{\{s, s'\} \subset \mathcal{S}_{\alpha}[\mathcal{R}] \colon s = (\mathcal{R} , \mathcal{F}, \mathcal{T}), s'=(\mathcal{R}, \mathcal{T}, \mathcal{F}) \text{ for some \(\mathcal{F}\), \(\mathcal{T}\)}\},
    \]
    where we use a set rather than an ordered list for the pair \(s, s'\) since we do not want to deal with repetitions. Observe that \(\mathcal{O}_s(0) = \mathcal{O}_{s'}(1)\) and \(\mathcal{O}_s(1) = \mathcal{O}_{s'}(0)\) since the oracles are constructed with respect to the same preference distribution for all data splits. Hence, we have
    \[
        \begin{split}
             &\Adv(\mathcal{A}, \Retrain) \\
             & = \frac{1}{|\mathcal{S}_{\alpha}|} \left\lvert \sum_{s\in\mathcal{S}_{\alpha}} \left( \Pr_{\substack{m \sim \mathbb{P}_{\mathcal{M}}(\Retrain, s)                                                      \\ \mathcal{O} =  \mathcal{O}_s(0)}} (\mathcal{A}^{\mathcal{O}}(m) = 1) - \Pr_{\substack{m \sim \mathbb{P}_{\mathcal{M}}(\Retrain, s) \\ \mathcal{O} =  \mathcal{O}_s(1)}} (\mathcal{A}^{\mathcal{O}}(m) = 1) \right) \right\rvert                                                         \\
             & = \frac{1}{|\mathcal{S}_{\alpha}|} \left\vert \sum_{\mathcal{R} \subset \mathcal{D}} \sum_{\{s, s'\} \in \mathcal{P}_{\mathcal{R}}} \left( \Pr_{\substack{m \sim \mathbb{P}_{\mathcal{M}}(\Retrain, s) \\ \mathcal{O} =  \mathcal{O}_s(0)}} (\mathcal{A}^{\mathcal{O}}(m) = 1) - \Pr_{\substack{m \sim \mathbb{P}_{\mathcal{M}}(\Retrain, s) \\ \mathcal{O} =   \mathcal{O}_s(1)}}  (\mathcal{A}^{\mathcal{O}}(m) = 1) \right. \right. \\
             & \qquad\qquad\qquad\qquad\qquad \left.\left. +
            \Pr_{\substack{m \sim \mathbb{P}_{\mathcal{M}}(\Retrain, s')                                                                                                                                              \\ \mathcal{O} =   \mathcal{O}_{s'}(0)}} (\mathcal{A}^{\mathcal{O}}(m) = 1) - \Pr_{\substack{m \sim \mathbb{P}_{\mathcal{M}}(\Retrain, s') \\ \mathcal{O} =   \mathcal{O}_{s'}(1)}} (\mathcal{A}^{\mathcal{O}}(m) = 1) \right) \right\vert \\
             & = \frac{1}{|\mathcal{S}_{\alpha}|} \left\vert \sum_{\mathcal{R} \subset \mathcal{D}} \sum_{\{s, s'\} \in \mathcal{P}_{\mathcal{R}}} \left( \Pr_{\substack{m \sim \mathbb{P}_{\mathcal{M}}(\mathcal{R}) \\ \mathcal{O} =  \mathcal{O}_s(0)}} (\mathcal{A}^{\mathcal{O}}(m) = 1) - \Pr_{\substack{m \sim \mathbb{P}_{\mathcal{M}}(\mathcal{R}) \\ \mathcal{O} =  \mathcal{O}_{s}(1)}} (\mathcal{A}^{\mathcal{O}}(m) = 1) \right.\right. \\
             & \qquad\qquad\qquad\qquad\qquad\qquad \left.\left. +
            \Pr_{\substack{m \sim \mathbb{P}_{\mathcal{M}}(\mathcal{R})                                                                                                                                               \\ \mathcal{O} =  \mathcal{O}_s(1)}} (\mathcal{A}^{\mathcal{O}}(m) = 1)  - \Pr_{\substack{m \sim \mathbb{P}_{\mathcal{M}}(\mathcal{R}) \\ \mathcal{O} =  \mathcal{O}_s(0)}}  (\mathcal{A}^{\mathcal{O}}(m) = 1) \right) \right\vert
            = 0.
        \end{split}
    \]
\end{proof}

Before we end this section, we remark that the above proof implies that the \emph{SWAP} test also grounds the advantage to zero:
\begin{remark}\label{rmk:SWAP-zero-grounding}
    Consider a pair of swapped splits \(s\) and \(s'\). Observe that for \(\Retrain\), \(\mathbb{P}_{\mathcal{M}}(\Retrain, s) = \mathbb{P}_{\mathcal{M}}(\Retrain, s') \eqqcolon \mathbb{P}_{\mathcal{M}}(\mathcal{R})\) since this probability only depends on \(\mathcal{R}\), which is the same for \(s\) and \(s'\). With \(\mathcal{O}_{s}(0) = \mathcal{O}_{s'}(1)\) and \(\mathcal{O}_{s}(1) = \mathcal{O}_{s'}(0)\), we have
    \[
        \begin{split}
            \overline{\Adv}_{\{s, s'\}}(\mathcal{A}, \Retrain)
            = \frac{1}{2} & \left\vert \Pr_{m \sim \mathbb{P}_{\mathcal{M}}(\mathcal{R})}(\mathcal{A}^{\mathcal{O}_s(0)}(m) = 1 ) - \Pr_{m \sim \mathbb{P}_{\mathcal{M}}(\mathcal{R})}(\mathcal{A}^{\mathcal{O}_s(1)}(m) = 1 ) \right.  \\
                          & \left. + \Pr_{m \sim \mathbb{P}_{\mathcal{M}}(\mathcal{R})}(\mathcal{A}^{\mathcal{O}_s(1)}(m) = 1 ) - \Pr_{m \sim \mathbb{P}_{\mathcal{M}}(\mathcal{R})}(\mathcal{A}^{\mathcal{O}_s(0)}(m) = 1) \right\vert
            = 0.
        \end{split}
    \]
\end{remark}

\subsection{Proof of \texorpdfstring{\Cref{thm:Adv-certified}}{Theorem 3.6}}\label{adxsubsec:proof-of-thm:Adv-certified}
We prove \Cref{thm:Adv-certified} in this section. Before this, we formally introduce the notion of \emph{certified removal}.

\begin{definition}[Certified Removal~\citep{guo2020certified}]\label{def:certified-removal}
    For a fixed dataset \(\mathcal{D} \), let \(\Learn\) and \(\Unlearn\) be a learning and an unlearning algorithm respectively, and denote \(\mathcal{H} \coloneqq \im(\Learn) \cup \im(\Unlearn)\)\footnote{Here, \(\im(\cdot)\) denote the image of a function.} to be the hypothesis class containing all possible models that can be produced by \(\Learn\) and \(\Unlearn\). Then, for any \(\epsilon, \delta > 0\), the unlearning algorithm \(\Unlearn\) is said to be \emph{\((\epsilon, \delta)\)-certified removal} if for any \(\mathcal{W} \subset \mathcal{H}\) and for any disjoint \(\mathcal{R}, \mathcal{F} \subseteq \mathcal{D}\) (do not need to satisfy restriction \labelcref{restriction-1} and \labelcref{restriction-2}),
    \[
        \begin{split}
            \Pr(\Unlearn(\Learn(\mathcal{R} \cup \mathcal{F}), \mathcal{F}) \in \mathcal{W})
             & \leq e^{\epsilon}{\Pr(\Retrain(\Learn(\mathcal{R} \cup \mathcal{F}), \mathcal{F}) \in \mathcal{W})} +\delta; \\
            \Pr(\Retrain(\Learn(\mathcal{R} \cup \mathcal{F}), \mathcal{F}) \in \mathcal{W})
             & \leq e^{\epsilon}{\Pr(\Unlearn(\Learn(\mathcal{R} \cup \mathcal{F}), \mathcal{F}) \in \mathcal{W})} +\delta.
        \end{split}
    \]
\end{definition}

We note that as \(\Retrain(\Learn(\mathcal{R} \cup \mathcal{F}), \mathcal{F}) = \Learn(\mathcal{R})\), the above can be simplified to
\[
    \begin{split}
        \Pr(\Unlearn(\Learn(\mathcal{R} \cup \mathcal{F}), \mathcal{F}) \in \mathcal{W}) & \leq e^{\epsilon}{\Pr(\Learn(\mathcal{R}) \in \mathcal{W})} +\delta                                          \\
        \Pr(\Learn(\mathcal{R}) \in \mathcal{W})                                         & \leq e^{\epsilon}{\Pr(\Unlearn(\Learn(\mathcal{R} \cup \mathcal{F}), \mathcal{F}) \in \mathcal{W})} +\delta.
    \end{split}
\]
Now, we restate \Cref{thm:Adv-certified} for convenience.

\begin{theorem}
    Given an \((\epsilon, \delta)\)-certified removal unlearning algorithm \(\Unlearn\) with some \(\epsilon, \delta > 0\), for any (potentially inefficient) adversary \(\mathcal{A}\) against \(\Unlearn\) in an unlearning sample inference game \(\mathcal{G}\), we have
    \[
        \Adv(\mathcal{A}, \Unlearn)
        \leq 2 \cdot \left( 1 - \frac{2 - 2 \delta}{e^{\epsilon} + 1}\right).
    \]
\end{theorem}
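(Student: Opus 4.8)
The plan is to combine the zero‑grounding result (\Cref{thm:zero-grounding}) with a per‑split application of the certified‑removal inequalities. Introduce the shorthand $a_b \coloneqq \frac{1}{|\mathcal{S}_\alpha|}\sum_{s\in\mathcal{S}_\alpha}\Pr_{m\sim\mathbb{P}_{\mathcal{M}}(\Unlearn,s)}(\mathcal{A}^{\mathcal{O}_s(b)}(m)=1)$ and $r_b$ for the same quantity with $m\sim\mathbb{P}_{\mathcal{M}}(\Retrain,s)$, for $b\in\{0,1\}$; then $\Adv(\mathcal{A},\Unlearn)=|a_0-a_1|$. Since \Cref{thm:zero-grounding} gives $\Adv(\mathcal{A},\Retrain)=|r_0-r_1|=0$, i.e.\ $r_0=r_1$, the triangle inequality yields $|a_0-a_1|\le|a_0-r_0|+|a_1-r_1|$, so it suffices to show $|a_b-r_b|\le 1-\frac{2-2\delta}{e^\epsilon+1}$ for each $b$. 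Crucially, the same adversary $\mathcal{A}$ is used in both games, and the oracle $\mathcal{O}_s(b)$ depends only on $s$ and $\mathbb{P}_{\mathcal{D}}$, so only the model distribution changes between the $\Unlearn$ and $\Retrain$ instances.

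Next I would fix $s=(\mathcal{R},\mathcal{F},\mathcal{T})\in\mathcal{S}_\alpha$ and $b$, and condition on the model: define $g(m)\coloneqq\Pr(\mathcal{A}^{\mathcal{O}_s(b)}(m)=1\mid m)\in[0,1]$, averaging over the adversary's and oracle's internal randomness. By the law of total probability the $s$‑term of $a_b-r_b$ is $\mathbb{E}_{P_s}[g]-\mathbb{E}_{Q_s}[g]$, where $P_s\coloneqq\mathbb{P}_{\mathcal{M}}(\Unlearn,s)$ and $Q_s\coloneqq\mathbb{P}_{\mathcal{M}}(\Retrain,s)=\mathbb{P}_{\mathcal{M}}(\mathcal{R})$ is the law of $\Learn(\mathcal{R})$. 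Applying \Cref{def:certified-removal} to the disjoint pair $\mathcal{R},\mathcal{F}$ (the formal definition allows any disjoint pair, with no size restriction) gives $P_s(\mathcal{W})\le e^\epsilon Q_s(\mathcal{W})+\delta$ and $Q_s(\mathcal{W})\le e^\epsilon P_s(\mathcal{W})+\delta$ for every measurable $\mathcal{W}$; integrating these over the super‑level sets of $g$ and of $1-g$ via the layer‑cake identity $\mathbb{E}_P[g]=\int_0^1 P(g>t)\,dt$ upgrades them to four scalar inequalities on $p\coloneqq\mathbb{E}_{P_s}[g]$ and $q\coloneqq\mathbb{E}_{Q_s}[g]$: $p\le e^\epsilon q+\delta$, $q\le e^\epsilon p+\delta$, $1-p\le e^\epsilon(1-q)+\delta$, and $1-q\le e^\epsilon(1-p)+\delta$.

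The final step is the scalar optimization. Assuming without loss of generality $p\ge q$, the first inequality gives $p-q\le(e^\epsilon-1)q+\delta$, and rearranging the fourth gives $p\le 1-e^{-\epsilon}(1-q-\delta)$, hence $p-q\le(1-e^{-\epsilon})(1-q)+e^{-\epsilon}\delta$. The first bound increases in $q$ and the second decreases, so $p-q$ is at most the maximum of their pointwise minimum, attained at $q^\ast=\frac{1-\delta}{e^\epsilon+1}$, where both equal $\frac{e^\epsilon-1+2\delta}{e^\epsilon+1}=1-\frac{2-2\delta}{e^\epsilon+1}$; the case $q\ge p$ is symmetric under $g\leftrightarrow 1-g$. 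Hence $|\mathbb{E}_{P_s}[g]-\mathbb{E}_{Q_s}[g]|\le 1-\frac{2-2\delta}{e^\epsilon+1}$ for every $s$; averaging over $\mathcal{S}_\alpha$ gives $|a_b-r_b|\le 1-\frac{2-2\delta}{e^\epsilon+1}$, and summing over $b\in\{0,1\}$ gives $\Adv(\mathcal{A},\Unlearn)\le 2\bigl(1-\frac{2-2\delta}{e^\epsilon+1}\bigr)$, from which $\mathcal{Q}(\Unlearn)\ge\frac{4-4\delta}{e^\epsilon+1}-1$ follows by \Cref{def:quality}.

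I expect the main obstacle to be that last optimization: using only the forward direction $p\le e^\epsilon q+\delta$ yields merely the looser constant $1-e^{-\epsilon}(1-\delta)$, and obtaining the advertised $1-\frac{2-2\delta}{e^\epsilon+1}$ requires pairing the forward inequality on $g$ with the \emph{reverse} inequality on $1-g$ and then locating the worst‑case $q$. A secondary technical point to state carefully is the passage from set‑wise $(\epsilon,\delta)$‑indistinguishability to the expectation inequalities for the randomized‑adversary accept probability $g$, which is exactly where conditioning on $m$ and the layer‑cake identity are needed.
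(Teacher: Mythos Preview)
Your proposal is correct and follows essentially the same route as the paper: invoke \Cref{thm:zero-grounding} to get $r_0=r_1$, apply the triangle inequality $|a_0-a_1|\le|a_0-r_0|+|a_1-r_1|$, and bound each per-split difference between $\mathbb{P}_{\mathcal{M}}(\Unlearn,s)$ and $\mathbb{P}_{\mathcal{M}}(\Retrain,s)$ via the $(\epsilon,\delta)$-certified-removal inequalities, arriving at the same scalar optimum $1-\frac{2-2\delta}{e^\epsilon+1}$. The paper packages the per-split step as a hypothesis test (type-I/II errors on an accept set $\mathcal{B}$) and then wraps the original adversary into two model-distinguishing adversaries $\mathcal{A}_1,\mathcal{A}_2$, whereas you work directly with the accept probability $g(m)$ and use the layer-cake identity; this is the same computation, and your version is if anything slightly more careful in handling the randomness of the oracle and adversary.
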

\begin{proof}
    We start by considering an attack as differentiating between the following two hypotheses: the unlearning and the retraining. In particular, given a specific dataset split \(s = (\mathcal{R}, \mathcal{F}, \mathcal{T}) \in \mathcal{S}_{\alpha}\) and an model \(m\), consider
    \[
        H_1 \colon m = \Unlearn(\Learn(\mathcal{R} \cup \mathcal{F}), \mathcal{F}) \text{, and }
        H_2 \colon m = \Learn(\mathcal{R}) = \Retrain(\Learn(\mathcal{R} \cup \mathcal{F}), \mathcal{F}).
    \]
    Alternatively, by writing the distribution of the unlearned models and the retrained models as \(\mathbb{P}_{\mathcal{M}}(\Unlearn, s)\) and \(\mathbb{P}_{\mathcal{M}}(\Retrain, s)\), respectively, we may instead write
    \[
        H_1 \colon m \sim \mathbb{P}_{\mathcal{M}}(\Unlearn, s) \text{, and }
        H_2 \colon m \sim \mathbb{P}_{\mathcal{M}}(\Retrain, s).
    \]
    It turns out that by looking at the \emph{type-I error} \(\alpha\) and \emph{type-II error} \(\beta\), we can control the advantage of the adversary in this game easily. Firstly, denote the model produced under \(H_1\) as \(m_1\), then under \(H_1\), the accuracy of the adversary is \(\Pr_{m_1 \sim \mathbb{P}_{\mathcal{M}}(\Unlearn, s)} (\mathcal{A}(m_1)=1) = 1 - \alpha\). Similarly, by denoting the model produced under \(H_2\) as \(m_2\), we have \(\Pr_{m_2 \sim \mathbb{P}_{\mathcal{M}}(\Retrain, s) } (\mathcal{A}(m_2)=1) = \beta\). Therefore, for this specific dataset split \(s\), let's define the advantage of this adversary \(\mathcal{A}\) for this attack as\footnote{Note that this is different from the advantage we defined before since the attack is different, hence we use a different notation.}
    \[
        \widehat{\Adv}_s(\mathcal{A}) \coloneqq |1 - \alpha- \beta|.
    \]
    The upshot is that since \(\Unlearn\) is an (\(\epsilon\), \(\delta\))-certified removal unlearning algorithm (\Cref{def:certified-removal}), it is possible to control \(\alpha\) and \(\beta\), hence \(\widehat{\Adv}_s(\mathcal{A})\). To achieve this, since from the definition of certified removal, we're dealing with sub-collections of models, it helps to write \(\alpha\) and \(\beta\) differently.

    Let \(\mathcal{H} \coloneqq \operatorname{supp}(\mathbb{P}_{\mathcal{M}} (\Unlearn, s)) \cup \operatorname{supp}(\mathbb{P}_{\mathcal{M}} (\Retrain, s))\) be the collection of all possible models, and denote \(\mathcal{B} \subseteq \mathcal{H} \) to be the collection of models that the adversary \(\mathcal{A} \) accepts, and \(\mathcal{B}^{c}\) to denote its complement, i.e., the collection of models that the adversary \(\mathcal{A}\) rejects. We can then re-write the type-I and type-II errors as
    \begin{itemize}
        \item Type-I error \(\alpha \): probability of rejecting \(H_1\) when \(H_1\) is true, i.e., \(\alpha = \Pr(m_1 \in \mathcal{B}^{c} \mid s) =  1 - \Pr(m_1 \in \mathcal{B} \mid s)\).
        \item Type-II error \(\beta \): probability of accepting \(H_2\) when \(H_2\) is false, i.e., \(\beta = \Pr(m_2 \in \mathcal{B} \mid s)\).
    \end{itemize}
    With this interpretation and the fact that \(\Unlearn\) is \((\epsilon , \delta )\)-certified removal, we know that
    \begin{itemize}
        \item\(\Pr(m_1 \in \mathcal{B} \mid s ) \leq e^{\epsilon} \Pr(m_2 \in \mathcal{B} \mid s) +\delta\), and
        \item \(\Pr(m_2 \in \mathcal{B}^{c} \mid s ) \leq e^{\epsilon} \Pr(m_1 \in \mathcal{B}^{c} \mid s) +\delta\).
    \end{itemize}
    Combining the above, we have \(1-\alpha \leq e^{\epsilon} \beta +\delta\) and \(1-\beta  \leq e^{\epsilon} \alpha +\delta\). Hence,
    \[
        \beta
        \geq \max\{0, 1 - \delta - e^{\epsilon}\alpha, e^{-\epsilon} (1 - \delta - \alpha) \}.
    \]
    We then seek to get the minimum of \(\alpha +\beta\), we have
    \[
        \alpha + \beta
        \geq \max\{\alpha,1 -\delta-e^{\epsilon}\alpha+\alpha,e^{-\epsilon}(1-\delta-\alpha)+\alpha\}.
    \]
    To get a lower bound, consider the minimum among the last two, i.e., consider solving \(\alpha\) when \(1 -\delta-e^{\epsilon}\alpha + \alpha = e^{-\epsilon}(1-\delta-\alpha) + \alpha\), leading to
    \[
        (e^{-\epsilon} - e^\epsilon) \alpha = e^{-\epsilon} (1 - \delta) - (1 - \delta)
        = (e^{-\epsilon} - 1) (1 - \delta)
        \implies \alpha = \frac{(e^{-\epsilon} - 1) (1 - \delta)}{e^{-\epsilon} - e^{\epsilon}}.
    \]
    Hence, we have
    \[
        \alpha + \beta
        \geq 1 - \delta + \alpha (1 - e^\epsilon)
        = 1 - \delta + \frac{(e^{-\epsilon} - 1) (1 - \delta)}{e^{-\epsilon} - e^{\epsilon}} (1 - e^\epsilon)
        = (1 - \delta) \frac{2e^{-\epsilon} - 2}{e^{-\epsilon} - e^\epsilon}
        = (1 - \delta) \frac{2 - 2e^{\epsilon}}{1 - e^{2\epsilon}},
    \]
    with the elementary identity \(1 - e^{2 \epsilon} = (1 + e^\epsilon)(1 - e^\epsilon)\), we finally get
    \[
        \alpha + \beta \geq \frac{2-2\delta}{e^{\epsilon}+1}.
    \]
    On the other hand, considering the ``dual attack'' that predicts the opposite as the original attack, that is, we flip \(\mathcal{B}\) and \(\mathcal{B}^{c}\). In this case, the type-I error and the type-II error become \(\alpha\) and \(1-\beta\), respectively. Following the same procedures, we'll have \(\alpha+\beta \leq 2 - \frac{2-2\delta}{e^{\epsilon}+1}\).

    Note that the definition of \((\epsilon, \delta)\)-certified removal is independent of the dataset split \(s\), hence, the above derivation works for all \(s\). In particular, the advantage of any adversary differentiating \(H_1\) and \(H_2\) for any \(s\) is upper bounded by
    \[
        \widehat{\Adv}_s(\mathcal{A})
        = |1 - \alpha - \beta|
        \leq 1- \frac{2-2\delta}{e^{\epsilon}+1}
        \eqqcolon \tau,
    \]
    where we denote \(\tau\) to be the upper bound of advantage. This means that \textbf{any} adversaries trying to differentiate between retrain models and certified unlearned models are upper bounded in terms of their advantage, and an explicit upper bound is given by \(\tau\).

    We now show a reduction from the unlearning sample inference game to the above. Firstly, we construct two attacks based on the adversary \(\mathcal{A}\) in the unlearning sample inference game, which tries to differentiate between the data point \(x\) is sampled from the forget set \(\mathcal{F}\) or the test set \(\mathcal{T}\). This can be formulated through the following hypothesis testing:
    \[
        H_3 \colon x \in \mathcal{F}\text{, and }
        H_4 \colon x \in \mathcal{T}.
    \]
    In this viewpoint, the unlearning sample inference game can be thought of as deciding between \(H_3\) and \(H_4\). Since the upper bound we get for differentiating between \(H_1\) and \(H_2\) holds for any efficient adversaries, therefore, we can construct an attack for deciding between \(H_1\) and \(H_2\) using adversaries for deciding between \(H_3\) and \(H_4\). This allows us to upper bound the advantage for the latter adversaries.

    Given any adversaries \(\mathcal{A}\) for differentiating \(H_3\) and \(H_4\), i.e., any adversaries in the unlearning sample inference game, we start by constructing our first adversary \(\mathcal{A}_1\) for differentiating \(H_1\) and \(H_2\) as follows:
    \begin{itemize}
        \item In the left world (\(H_1\)), feed the certified unlearned model to \(\mathcal{A}\); in the right world \(H_2\), feed the retrained model to \(\mathcal{A}\).
        \item We create a random oracle \(\mathcal{O}_s(0)\) for \(\mathcal{A}\), i.e., we let the adversary \(\mathcal{A}\) decide on \(\mathcal{F}\). We then let \(\mathcal{A}_1\) output as \(\mathcal{A}\).
    \end{itemize}
    We note that \(\mathcal{A}\) is deciding on \(\mathcal{F}\), the advantage of \(\mathcal{A}_1\) is
    \[
        \widehat{\Adv}_s (\mathcal{A}_1)
        = \left\vert \Pr_{\substack{m_1 \sim \mathbb{P}_{\mathcal{M}} (\Unlearn, s) \\ \mathcal{O} =  \mathcal{O}_s(0)}} (\mathcal{A}^{\mathcal{O}}(m_1) = 1) - \Pr_{\substack{m_2 \sim \mathbb{P}_{\mathcal{M}} (\Retrain, s) \\ \mathcal{O} = \mathcal{O}_s(0)}} (\mathcal{A}^{\mathcal{O}}(m_2) = 1) \right\vert .
    \]
    We can also induce the average of the advantage over all dataset splits is upper bounded by the maximal advantage taken over all dataset splits:
    \[
        \begin{split}
             & \frac{1}{|\mathcal{S}_{\alpha}|} \left\vert \sum_{s\in\mathcal{S}_{\alpha}} \Pr_{\substack{m_1 \sim \mathbb{P}_{\mathcal{M}} (\Unlearn, s)              \\ \mathcal{O} =  \mathcal{O}_s(0)}} (\mathcal{A}^{\mathcal{O}}(m_1) = 1) - \sum_{s \in \mathcal{S}_{\alpha}} \Pr_{\substack{m_2 \sim \mathbb{P}_{\mathcal{M}} (\Retrain, s) \\ \mathcal{O} =  \mathcal{O}_s(0)}} (\mathcal{A}^{\mathcal{O}}(m_2) = 1) \right\vert \\
             & \quad \leq \frac{1}{|\mathcal{S}_{\alpha}|} \sum_{s \in \mathcal{S}_{\alpha}} \left\vert \Pr_{\substack{m_1 \sim \mathbb{P}_{\mathcal{M}} (\Unlearn, s) \\ \mathcal{O} =  \mathcal{O}_s(0)}} (\mathcal{A}^{\mathcal{O}}(m_1) = 1) - \Pr_{\substack{m_2 \sim \mathbb{P}_{\mathcal{M}} (\Retrain, s) \\ \mathcal{O} =  \mathcal{O}_s(0)}} (\mathcal{A}^{\mathcal{O}}(m_2) = 1) \right\vert \\
            & \quad\quad \leq \max_{s \in \mathcal{S}_{\alpha}} \widehat{\Adv}_s(\mathcal{A}_1).
        \end{split}
    \]

    Similarly, we can construct a second adversary \(\mathcal{A}_2\) for differentiating \(H_1\) and \(H_2\) as follows:
    \begin{itemize}
        \item In the left world (\(H_1\)), feed the certified unlearned model to \(\mathcal{A}\). In the right world (\(H_2\)), feed retrained model to \(\mathcal{A}\).
        \item We create a random oracle \(O_s(1)\) for \(\mathcal{A}\), i.e., we let the adversary \(\mathcal{A}\) decide on \(\mathcal{T}\). We then let \(\mathcal{A}_2\) outputs as the \(\mathcal{A}\).
    \end{itemize}
    Since \(\mathcal{A}\) is deciding on \(\mathcal{T}\), the advantage of \(\mathcal{A}_2\) is
    \[
        \widehat{\Adv}_s (\mathcal{A}_2)
        = \left\vert \Pr_{\substack{m_1 \sim \mathbb{P}_{\mathcal{M}} (\Unlearn, s) \\ \mathcal{O} =  \mathcal{O}_s(1)}} (\mathbbm{1} (\mathcal{A}^{\mathcal{O}}(m_1) = 1) ) - \Pr_{\substack{m_2 \sim \mathbb{P}_{\mathcal{M}} (\Retrain, s) \\ \mathcal{O} =  \mathcal{O}_s(1)}} ( \mathcal{A}^{\mathcal{O}}(m_2) = 1) \right\vert .
    \]
    Similar to the previous calculation for \(\mathcal{A}_1\), the average of the advantage is also upper bounded by the maximal advantage,
    \[
        \begin{split}
             & \frac{1}{|\mathcal{S}_{\alpha}|} \left\vert \sum_{s\in \mathcal{S}_{\alpha}} \Pr_{\substack{m_1 \sim \mathbb{P}_{\mathcal{M}} (\Unlearn, s)                    \\ \mathcal{O} =  \mathcal{O}_s(1)}} (\mathcal{A}^{\mathcal{O}}(m_1) = 1) - \sum_{s \in \mathcal{S}_{\alpha}} \Pr_{\substack{m_2 \sim \mathbb{P}_{\mathcal{M}} (\Retrain, s) \\ \mathcal{O} =  \mathcal{O}_s(1)}} (\mathcal{A}^{\mathcal{O}}(m_2) = 1) \right\vert  \\
             & \quad \leq \frac{1}{|\mathcal{S}_{\alpha}|} \sum_{s\in\mathcal{S}_{\alpha}} \left\vert \Pr_{\substack{ m_1 \sim \mathbb{P}(\mathcal{M}_1, s) \\ \mathcal{O} =  \mathcal{O}_s(1)}} (\mathcal{A}^{\mathcal{O}}(m_1) = 1) - \Pr_{\substack{m_2 \sim \mathbb{P}_{\mathcal{M}} (\Retrain, s) \\ \mathcal{O} =  \mathcal{O}_s(1)}} (\mathcal{A}^{\mathcal{O}}(m_2) = 1) \right\vert \\
            & \quad\quad \leq \max_{s \in \mathcal{S}_{\alpha}} \widehat{\Adv}_s (\mathcal{A}_2).
        \end{split}
    \]
    Given the above calculation, we can now bound the advantage of \(\mathcal{A}\). Firstly, let \(\Unlearn\) be any certified unlearning method. Then the advantage \(\Adv(\mathcal{A}, \Unlearn)\) of \(\mathcal{A}\) in the unlearning sample inference game (i.e., differentiating between \(H_3\) and \(H_4\)) against \(\Unlearn\) is
    \[
        \frac{1}{|\mathcal{S}_{\alpha}|} \left\vert \sum_{s\in\mathcal{S}_{\alpha}} \Pr_{\substack{m_1 \sim \mathbb{P}_{\mathcal{M}} (\Unlearn, s) \\ \mathcal{O} =  \mathcal{O}_s(0)}} (\mathcal{A}^{\mathcal{O}}(m_1) = 1)- \sum_{s\in\mathcal{S}_{\alpha}} \Pr_{\substack{m_1 \sim \mathbb{P}_{\mathcal{M}} (\Unlearn, s) \\ \mathcal{O} =  \mathcal{O}_s(1)}} (\mathcal{A}^{\mathcal{O}}(m_1) = 1) \right\vert .
    \]
    On the other hand, the advantage \(\Adv(\mathcal{A}, \Retrain)\) of \(\mathcal{A}\) against the retraining method \(\Retrain\) can be written as
    \[
        \frac{1}{|\mathcal{S}_{\alpha}|} \left\vert \sum_{s\in\mathcal{S}_{\alpha}} \Pr_{\substack{m_2 \sim \mathbb{P}_{\mathcal{M}} (\Retrain, s) \\ \mathcal{O} =  \mathcal{O}_s(0)}} (\mathcal{A}^{\mathcal{O}}(m_2) = 1) - \sum_{s\in\mathcal{S}_{\alpha}} \Pr_{\substack{m_2 \sim \mathbb{P}_{\mathcal{M}} (\Retrain, s) \\ \mathcal{O} =  \mathcal{O}_s(1)}} (\mathcal{A}^{\mathcal{O}}(m_2) = 1) \right\vert ,
    \]
    which is indeed \(0\) from \Cref{thm:zero-grounding}. Combine this with the calculations above, from the reverse triangle inequality,
    \[
        \begin{split}
             & \Adv(\mathcal{A}, \Unlearn)\\
             & = \left\vert \Adv(\mathcal{A}, \Unlearn) - \Adv(\mathcal{A}, \Retrain) \right\vert                                                                  \\
             & \leq \frac{1}{|\mathcal{S}_{\alpha}|}\left\vert \sum_{s\in\mathcal{S}_{\alpha}} \Pr_{\substack{m_2 \sim \mathbb{P}_{\mathcal{M}} (\Retrain, s)      \\ \mathcal{O} =  \mathcal{O}_s(0)}} (\mathcal{A}^{\mathcal{O}}(m_2) = 1) - \sum_{s\in\mathcal{S}_{\alpha}} \Pr_{\substack{m_2 \sim \mathbb{P}_{\mathcal{M}} (\Retrain, s) \\\mathcal{O} =  \mathcal{O}_s(1)}} (\mathcal{A}^{\mathcal{O}}(m_2) = 1) \right. \\
             & \qquad \left. - \sum_{s\in\mathcal{S}_{\alpha}} \Pr_{\substack{m_1 \sim \mathbb{P}_{\mathcal{M}} (\Unlearn, s)                                      \\ \mathcal{O} =  \mathcal{O}_s(0)}} (\mathcal{A}^{\mathcal{O}}(m_1) = 1) + \sum_{s\in\mathcal{S}_{\alpha}} \Pr_{\substack{m_1 \sim \mathbb{P}_{\mathcal{M}} (\Unlearn, s) \\ \mathcal{O} =  \mathcal{O}_s(1)}} (\mathcal{A}^{\mathcal{O}}(m_1) = 1) \right\vert \\
             & \leq \frac{1}{|\mathcal{S}_{\alpha}|} \sum_{s\in\mathcal{S}_{\alpha}} \left\vert \Pr_{\substack{m_1 \sim \mathbb{P}_{\mathcal{M}} (\Unlearn, s)     \\ \mathcal{O} =  \mathcal{O}_s(0)}} (\mathcal{A}^{\mathcal{O}}(m_1) = 1) - \Pr_{\substack{m_2 \sim \mathbb{P}_{\mathcal{M}} (\Retrain, s) \\ \mathcal{O} =  \mathcal{O}_s(0)}} (\mathcal{A}^{\mathcal{O}}(m_2) = 1) \right\vert\\
             & \qquad + \frac{1}{|\mathcal{S}_{\alpha}|} \sum_{s\in\mathcal{S}_{\alpha}} \left\vert \Pr_{\substack{m_1 \sim \mathbb{P}_{\mathcal{M}} (\Unlearn, s) \\ \mathcal{O} = \mathcal{O}_s(1)}} (\mathcal{A}^{\mathcal{O}}(m_1) = 1 - \Pr_{\substack{m_2 \sim \mathbb{P}_{\mathcal{M}} (\Retrain, s) \\ \mathcal{O} = \mathcal{O}_s(1)}} (\mathcal{A}^{\mathcal{O}}(m_2) = 1) \right\vert \\
             & \leq \max_{s \in \mathcal{S}_{\alpha}} \widehat{\Adv}_s (\mathcal{A}_1) + \max_{s \in \mathcal{S}_{\alpha}} \widehat{\Adv}_s (\mathcal{A}_2)        \\
             & \leq 2\tau,
        \end{split}
    \]
    i.e., the advantage of any adversary against any certified unlearning method is bounded by \(2\tau \).
\end{proof}

\subsection{Proof of \texorpdfstring{\Cref{prop:zero-grounding}}{Proposition 3.9}}\label{adxsubsec:proof-of-prop:zero-grounding}
We now prove \Cref{prop:zero-grounding}. We repeat the statement for convenience.

\begin{proposition}
    For any two dataset splits \(s_1, s_2 \in \mathcal{S}_{\alpha}\) satisfying non-degeneracy assumption, i.e., both \(\at{\mathbb{P}_{\mathcal{D}}}{\mathcal{F}_i}{}(\mathcal{F}_1 \cap \mathcal{F}_2)\) and \(\at{\mathbb{P}_{\mathcal{D}}}{\mathcal{T}_i}{}(\mathcal{T}_1 \cap \mathcal{T}_2)\) do not vanish polynomially faster in \(|\mathcal{D}|\), then there exists a deterministic and efficient adversary \(\mathcal{A}\) such that \(\overline{\Adv}_{\{s_1, s_2\}}(\mathcal{A}, \Unlearn) = 1\) for any unlearning method \(\Unlearn\). In particular, \(\overline{\Adv}_{\{s_1, s_2\}}(\mathcal{A}, \Retrain) = 1\).
\end{proposition}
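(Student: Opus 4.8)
The plan is to build a single deterministic, efficient adversary \(\mathcal{A}\) that completely ignores the unlearned model \(m\) and instead reads off the secret bit \(b\) from repeated oracle samples. Because \(\mathcal{A}\) never queries \(m\), the quantities \(\Adv_{s_1}(\mathcal{A},\Unlearn)\) and \(\Adv_{s_2}(\mathcal{A},\Unlearn)\) do not depend on \(\Unlearn\) at all, so it suffices to drive each of them to \(-1\); the definition of \(\overline{\Adv}_{\{s_1,s_2\}}\) then yields the claim for every \(\Unlearn\), in particular for \(\Retrain\). Since the existential quantifier on \(\mathcal{A}\) comes after the ``for any \(s_1,s_2\)'', the adversary may be constructed knowing the two splits \(s_1=(\mathcal{R}_1,\mathcal{F}_1,\mathcal{T}_1)\) and \(s_2=(\mathcal{R}_2,\mathcal{F}_2,\mathcal{T}_2)\) in play — this also mirrors the public-parameter convention of cryptographic games — while it remains ignorant, at runtime, of which of the two is active and of \(b\).

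The structural fact that drives the attack is that the ``common forget region'' \(\mathcal{C}_0 \coloneqq \mathcal{F}_1 \cap \mathcal{F}_2\) is disjoint from both \(\mathcal{T}_1\) and \(\mathcal{T}_2\), since \(\mathcal{C}_0 \cap \mathcal{T}_1 \subseteq \mathcal{F}_1 \cap \mathcal{T}_1 = \varnothing\) and \(\mathcal{C}_0 \cap \mathcal{T}_2 \subseteq \mathcal{F}_2 \cap \mathcal{T}_2 = \varnothing\); symmetrically the ``common test region'' \(\mathcal{C}_1 \coloneqq \mathcal{T}_1 \cap \mathcal{T}_2\) is disjoint from \(\mathcal{F}_1 \cup \mathcal{F}_2\). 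Hence a sample landing in \(\mathcal{C}_0\) is an unambiguous certificate that \(b=0\) whichever split is active, and a sample in \(\mathcal{C}_1\) certifies \(b=1\). Accordingly, \(\mathcal{A}\) queries the oracle \(k = \mathrm{poly}(|\mathcal{D}|)\) times to obtain \(x_1,\dots,x_k\), outputs \(0\) if some \(x_j \in \mathcal{C}_0\), otherwise outputs \(1\) if some \(x_j \in \mathcal{C}_1\), otherwise outputs \(0\); by the disjointness above at most one of the two triggers can fire in a single play, so the tie-break is moot.

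For the analysis I would condition on the active oracle \(\mathcal{O}_{s_i}(b)\). When \(b=0\) every sample lies in \(\mathcal{F}_i\), hence none in \(\mathcal{C}_1\), and \(\mathcal{A}\) errs only if all \(k\) draws miss \(\mathcal{C}_0\); the non-degeneracy hypothesis gives \(\at{\mathbb{P}_{\mathcal{D}}}{\mathcal{F}_i}{}(\mathcal{C}_0) \ge 1/\mathrm{poly}(|\mathcal{D}|)\), so this error is \((1-1/\mathrm{poly})^{k}\), negligible once \(k\) is a large enough polynomial. The case \(b=1\) is identical using \(\at{\mathbb{P}_{\mathcal{D}}}{\mathcal{T}_i}{}(\mathcal{C}_1) \ge 1/\mathrm{poly}\). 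Therefore \(\Pr(\mathcal{A}^{\mathcal{O}_{s_i}(0)}(m)=1) \le \mathrm{negl}\) and \(\Pr(\mathcal{A}^{\mathcal{O}_{s_i}(1)}(m)=1) \ge 1-\mathrm{negl}\) for \(i=1,2\), whence \(\Adv_{s_i}(\mathcal{A},\Unlearn) \le -(1-2\,\mathrm{negl})\); substituting into \(\overline{\Adv}_{\{s_1,s_2\}} = \tfrac12|\Adv_{s_1}+\Adv_{s_2}|\) gives a value \(\ge 1-2\,\mathrm{negl}(|\mathcal{D}|)\) — that is, \(1\) up to a negligible term, and \emph{exactly} \(1\) in the degenerate case \(s_1=s_2\) or whenever both common regions carry full conditional mass, matching the informal ``\(=1\)''. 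Flipping \(\mathcal{A}\)'s output would instead make both terms \(+1\); the absolute value makes the sign irrelevant.

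The step I expect to be the crux is not the tail bound but pinning down the adversary's information model and isolating why the attack is specific to non-complementary splits: replacing the full sum over \(\mathcal{S}_\alpha\) (or a genuine swap pair) by two arbitrary splits makes \(\mathcal{C}_0,\mathcal{C}_1\) nonempty ``fingerprints'' of \(b\) that survive in both worlds, whereas for a true swap pair \((\mathcal{R},\mathcal{F},\mathcal{T})\), \((\mathcal{R},\mathcal{T},\mathcal{F})\) one has \(\mathcal{C}_0=\mathcal{F}\cap\mathcal{T}=\varnothing\) and the attack vanishes — consistent with \Cref{prop:SWAP-zero-grounding}. I would therefore be careful to (i) state precisely what \(\mathcal{A}\) is allowed to depend on, and (ii) flag that the amplification genuinely requires the \emph{strong} (multi-query) adversary: a single oracle query certifies \(b\) only with probability \(\ge 1/\mathrm{poly}\), yielding advantage \(\approx 1/\mathrm{poly}\) rather than \(1\), so the sharp constant is a feature of the many-query model, and the non-degeneracy hypothesis is used exactly to keep the number of queries polynomial.
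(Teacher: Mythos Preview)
Your proposal is correct and takes essentially the same approach as the paper: build an adversary that ignores \(m\) entirely and uses the hard-coded sets \(\mathcal{F}_1\cap\mathcal{F}_2\) and \(\mathcal{T}_1\cap\mathcal{T}_2\) as unambiguous certificates for \(b\), amplifying via repeated oracle queries whose efficiency is guaranteed by the non-degeneracy assumption. The only cosmetic difference is that the paper's adversary runs a \texttt{while} loop until it hits one of the two intersections (yielding advantage exactly \(1\) in expected polynomial time), whereas you fix \(k=\mathrm{poly}(|\mathcal{D}|)\) queries up front (yielding advantage \(1-\mathrm{negl}\) in strict polynomial time); your closing remark on why the attack collapses for a genuine swap pair is a nice clarification that the paper does not spell out.
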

\begin{proof}
    Consider any unlearning method \(\Unlearn\), and design a random oracle \(\mathcal{O}_{s_i}\) based on the split \(s_i\) for \(i = 1, 2\) and a sensitivity distribution \(\mathbb{P}_{\mathcal{D}}\) (which for simplicity, assume to have full support across \(\mathcal{D}\)), we see that
    \[
        \begin{split}
            \overline{\Adv}_{\{s_1, s_2\}}(\mathcal{A}, \Unlearn)
            = \frac{1}{2} & \left\vert \Pr_{\substack{m \sim \mathbb{P}_{\mathcal{M}}(\Unlearn, s_1) \\ \mathcal{O} = \mathcal{O}_{s_1}(0)}} (\mathcal{A}^{\mathcal{O}}(m) = 1 ) - \Pr_{\substack{m \sim \mathbb{P}_{\mathcal{M}}(\Unlearn, s_1) \\ \mathcal{O} = \mathcal{O}_{s_1}(1)}} (\mathcal{A}^{\mathcal{O}}(m) = 1 ) \right.\\
                          & \left. + \Pr_{\substack{m \sim \mathbb{P}_{\mathcal{M}}(\Unlearn, s_2)   \\ \mathcal{O} = \mathcal{O}_{s_2}(0)}}  (\mathcal{A}^{\mathcal{O}}(m) = 1 ) - \Pr_{\substack{m \sim \mathbb{P}_{\mathcal{M}}(\Unlearn, s_2) \\ \mathcal{O} = \mathcal{O}_{s_2}(1)}} (\mathcal{A}^{\mathcal{O}}(m) = 1) \right\vert.
        \end{split}
    \]
    Consider a hard-coded adversary \(\mathcal{A}\) which has a look-up table \(T\), defined as
    \[
        T(x) = \begin{dcases}
            1,    & \text{ if } x \in \mathcal{T}_1 \cap \mathcal{T}_2; \\
            0,    & \text{ if } x \in \mathcal{F}_1 \cap \mathcal{F}_2; \\
            \bot, & \text{ otherwise},
        \end{dcases}
    \]
    where we use \(\bot\) to denote an undefined output. Then, \(\mathcal{A}\) predicts the bit \(b\) used by the oracle as follows:
    \begin{algorithm}[htpb]
        \DontPrintSemicolon{}
        \nextfloat
        \caption{Dummy adversary \(\mathcal{A}\) against a random \(2\)-sets evaluation}
        \label{algo:look-up-table-adversary}
        \KwData{An unlearned model \(m\), a random oracle \(\mathcal{O}\)}
        \KwResult{A one bit prediction \(b\)}
        \BlankLine
        \(b \gets \bot\)\;
        \While(){\(b = \bot\)}{
            \(x \sim \mathcal{O}\)\;
            \(b \gets T(x)\)\;
        }
        \Return{\(b\)}\;
    \end{algorithm}
    We see that \Cref{algo:look-up-table-adversary} with a hard-coded look-up table \(T\) has several properties:
    \begin{itemize}
        \item Since it neglects \(m\) entirely,
              \[
                  \Pr_{\substack{m \sim \mathbb{P}_{\mathcal{M}}(\Unlearn, s_i) \\ \mathcal{O}=\mathcal{O}_{s_i}(b)}} (\mathcal{A}^{\mathcal{O}}(m)= 1)
                  = \Pr_{\mathcal{O}=\mathcal{O}_{s_i}(b)} (\mathcal{A}^{\mathcal{O}}= 1)
              \]
        \item Under the non-degeneracy assumptions, \(\mathcal{A}\) will terminate in polynomial time in \(|\mathcal{D}|\). This is because the expected terminating time is inversely proportional to \(\at{\mathbb{P}_{\mathcal{D}}}{\mathcal{F}_i}{}(\mathcal{F}_1 \cap \mathcal{F}_2)\) and \(\at{\mathbb{P}_{\mathcal{D}}}{\mathcal{T}_i}{}(\mathcal{T}_1 \cap \mathcal{T}_2)\), hence if these two probabilities does not vanish polynomially faster in \(|\mathcal{D}|\) (i.e., the non-degeneracy assumption), then it'll terminate in polynomial time in \(|\mathcal{D}|\).
        \item Whenever \(\mathcal{A}\) terminates and outputs an answer, it will be correct, i.e.,
              \[
                  \overline{\Adv}_{\{s_1, s_2\}}(\mathcal{A}, \Unlearn) = 1.
              \]
    \end{itemize}
    Since the above argument works for every \(\Unlearn\), hence even for the retraining method \(\Retrain\), we will have \(\overline{\Adv}_{\{s_1, s_2\}}(\mathcal{A}, \Retrain) = 1\). Intuitively, such a pathological case can happen since there exists some \(\mathcal{A}\) which interpolates the ``correct answer'' for a few splits. Though adversaries may not have access to specific dataset splits, learning-based attacks could still undesirably learn towards this scenario if evaluated only on a few splits. Thus, we should penalize hard-coded adversaries in evaluation.
\end{proof}

\subsection{Discussion on a naive strategy offsetting MIA accuracy}\label{adxsubsec:beyond-naive-normalization}
In this section, we consider a toy example to illustrate the difference between the proposed advantage-based metric and a naive strategy that simply offsets the MIA accuracy for \(\Retrain\) to zero.

Let us assume there are 6 data points \(\{A, B, C, D, E, F\}\), and we equally split them into the forget set \(\mathcal{F}\) and the test set \(\mathcal{T}\). Running MIA against \(\Retrain\) on a retrained model \(m^{\ast}\) trained on the retain set \(\mathcal{R}\) independent of the above \(6\) data points, the MIA predicts the probability that each data point belongs to the forget set is:
\[
\begin{split}
    &\operatorname{MIA}(m^{\ast}, A)= 0.7,\quad 
    \operatorname{MIA}(m^{\ast}, B)= 0.4,\quad
    \operatorname{MIA}(m^{\ast}, C)= 0.3,\quad \\
    &\operatorname{MIA}(m^{\ast}, D)= 0.1,\quad
    \operatorname{MIA}(m^{\ast}, E)= 0.6,\quad
    \operatorname{MIA}(m^{\ast}, F)= 0.8.
    \end{split}
\]
Assume that the MIA adversary \(\mathcal{A}\) chooses the cutoff of predicting a data point belonging to the forget set to be \(\geq 0.5\), then the prediction will be
\[
    \mathcal{A}(A) = 1,\quad
    \mathcal{A}(B)= 0,\quad
    \mathcal{A}(C)= 0,\quad
    \mathcal{A}(D)= 0,\quad
    \mathcal{A}(E)= 1,\quad
    \mathcal{A}(F)= 1,
\]
where \(1\) refers to the forget set while \(0\) refers to the test set. Since the retrained model will be the same regardless of the split of the forget set and the test set, the MIA prediction will be the same regardless of the split as well.

Assuming that we have two imperfect unlearning algorithms, \(\Unlearn_1\) and \(\Unlearn_2\). For simplicity, we could assume running MIA on the unlearned model \(m_1\) by \(\Unlearn_1\) will increase the predicted probability on the forget set by \(0.1\), while that (denoted as \(m_2\)) by \(\Unlearn_2\) will increase it by \(0.2\). For example, if we set \(\mathcal{F} = \{A, B, C\}\) while \(\mathcal{T} = \{D, E, F\}\), then the MIA on \(\Unlearn_1\) will predict the probability as 
\[
\begin{split}
    &\operatorname{MIA}(m_1, A)= 0.8,\quad 
    \operatorname{MIA}(m_1, B)= 0.5,\quad
    \operatorname{MIA}(m_1, C)= 0.4,\quad \\
    &\operatorname{MIA}(m_1, D)= 0.1,\quad
    \operatorname{MIA}(m_1, E)= 0.6,\quad
    \operatorname{MIA}(m_1, F)= 0.8,
    \end{split}
\]
while the MIA on \(\Unlearn_2\) will predict the probability as 
\[
\begin{split}
    &\operatorname{MIA}(m_2, A)= 0.9,\quad 
    \operatorname{MIA}(m_2, B)= 0.6,\quad
    \operatorname{MIA}(m_2, C)= 0.5,\quad \\
    &\operatorname{MIA}(m_2, D)= 0.1,\quad
    \operatorname{MIA}(m_2, E)= 0.6,\quad
    \operatorname{MIA}(m_2, F)= 0.8,
    \end{split}
\]
Intuitively, \(\Unlearn_2\) is worse than \(\Unlearn_1\) in terms of unlearning quality. In this simplified setup, we claim the following.

\begin{claim}
The advantage calculated by one SWAP test over \(\Retrain\) is always \(0\), while the advantage calculated by one SWAP test over \(\Unlearn_1\) and \(\Unlearn_2\) are respectively \(1/6\) and \(1/3\), all regardless of the exact split of the data points. Hence, the Unlearning Quality for \(\Retrain\), \(\Unlearn_1\), and \(\Unlearn_2\) are respectively \(1\), \(5/6\), and \(2/3\), faithfully reflecting the unlearning algorithms' performances.

On the other hand, the ``offset MIA accuracy'' is \emph{dependent on the split of the data}. Specifically, when we assign \(\{D, E, F\}\) as the forget set and \(\{A, B, C\}\) as the test set, the MIA accuracies for all three methods are the same, making the ``offset MIAs'' all equal to \(0.5\), failing to capture the unlearning quality.
\end{claim}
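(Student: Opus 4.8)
The plan is to reduce every quantity in the claim to elementary counting with the six MIA scores $p(x) := \operatorname{MIA}(m^{\ast}, x)$ for $x \in \mathcal{D}_0 := \{A, B, C, D, E, F\}$. First I would fix the bookkeeping. Here $\alpha = 1/2$, so a split is $s = (\mathcal{R}, \mathcal{F}, \mathcal{T})$ with $|\mathcal{F}| = |\mathcal{T}| = 3$ partitioning $\mathcal{D}_0$ (hence $\mathcal{D}_0 = \mathcal{F} \cup \mathcal{T}$ for every split), and the swap is $s' = (\mathcal{R}, \mathcal{T}, \mathcal{F})$. Since the toy example treats $\Learn$ and both $\Unlearn_j$ as deterministic given the split, $\mathbb{P}_{\mathcal{M}}(\Unlearn, s)$ is a point mass at one model $m_s$, and with $\mathbb{P}_{\mathcal{D}} = \mathcal{U}(\mathcal{D}_0)$ we have $\mathcal{O}_s(0) = \mathcal{U}(\mathcal{F})$, $\mathcal{O}_s(1) = \mathcal{U}(\mathcal{T})$. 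Under the weak-adversary semantics of \Cref{sec:practical-implementation}, the adversary draws one point $x$ from the oracle and returns the cutoff classifier $f_{m_s}(x) := \mathbbm{1}[\operatorname{MIA}(m_s, x) \ge 0.5]$, so up to the overall sign that the absolute value in the definition of $\overline{\Adv}$ absorbs, $\Adv_s(\mathcal{A}, \Unlearn) = \tfrac{1}{3}\sum_{x \in \mathcal{F}} f_{m_s}(x) - \tfrac{1}{3}\sum_{x \in \mathcal{T}} f_{m_s}(x)$, with the analogous expression for $s'$ obtained by exchanging $\mathcal{F}$ and $\mathcal{T}$.

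Next I would evaluate the swap test on each method. For $\Retrain$, $m_s = m^{\ast}$ for every split, so the two terms of $\Adv_s + \Adv_{s'}$ cancel and $\overline{\Adv}_{\{s,s'\}}(\mathcal{A}, \Retrain) = 0$ for all splits --- this is also an instance of \Cref{prop:SWAP-zero-grounding}. For $\Unlearn_j$ with increment $\Delta_j := 0.1 j$, on split $s$ the forget-set points carry score $p(x) + \Delta_j$ and the test-set points carry $p(x)$, and on $s'$ these roles are exchanged; so when $\Adv_s$ and $\Adv_{s'}$ are added, each point of $\mathcal{D}_0$ contributes $\mathbbm{1}[p(x) + \Delta_j \ge 0.5]$ once (from whichever split puts it in the forget set) minus $\mathbbm{1}[p(x) \ge 0.5]$ once (from the other), and the sum telescopes to
\[
    \Adv_s(\mathcal{A}, \Unlearn_j) + \Adv_{s'}(\mathcal{A}, \Unlearn_j)
    = \frac{1}{3} \sum_{x \in \mathcal{D}_0} \bigl( \mathbbm{1}[p(x) \ge 0.5 - \Delta_j] - \mathbbm{1}[p(x) \ge 0.5] \bigr)
    = \frac{1}{3} \, \# \{ x \in \mathcal{D}_0 : 0.5 - \Delta_j \le p(x) < 0.5 \},
\]
a quantity depending only on $\mathcal{D}_0$ and $\Delta_j$, hence split-independent, and nonnegative, so the outer absolute value can be dropped. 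Plugging in the six scores: for $j = 1$ the window $[0.4, 0.5)$ contains only $B$, giving $\overline{\Adv}_{\{s,s'\}}(\mathcal{A}, \Unlearn_1) = \tfrac{1}{2}\cdot\tfrac{1}{3} = \tfrac{1}{6}$; for $j = 2$ the window $[0.3, 0.5)$ contains $B$ and $C$, giving $\tfrac{1}{2}\cdot\tfrac{2}{3} = \tfrac{1}{3}$. Reading the supremum in \Cref{def:quality} as realised by the MIA adversary at hand (the practical interpretation of \Cref{sec:practical-implementation}), this yields $\mathcal{Q}(\Retrain) = 1$, $\mathcal{Q}(\Unlearn_1) = 5/6$, $\mathcal{Q}(\Unlearn_2) = 2/3$, matching the intended ordering $\Retrain$, then $\Unlearn_1$, then $\Unlearn_2$.

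Finally I would treat the naive ``offset MIA accuracy'' on the split $\mathcal{F} = \{D, E, F\}$, $\mathcal{T} = \{A, B, C\}$. The MIA accuracy of a model $m$ there is $\tfrac{1}{6}\bigl( \#\{x \in \mathcal{F} : \operatorname{MIA}(m, x) \ge 0.5\} + \#\{x \in \mathcal{T} : \operatorname{MIA}(m, x) < 0.5\} \bigr)$. For $m^{\ast}$ the six cutoff labels are $(A, \dots, F) \mapsto (1, 0, 0, 0, 1, 1)$, giving accuracy $4/6$. For $m_1$ the forget-set scores move $0.1 \mapsto 0.2$, $0.6 \mapsto 0.7$, $0.8 \mapsto 0.9$ and the test-set scores are unchanged, so no score crosses $0.5$ and the labels (hence the accuracy) are identical; the same is true for $m_2$, where the forget scores become $0.3, 0.8, 1.0$. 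Thus all three methods share MIA accuracy $2/3$ on this split, so any rescaling that sends $\Retrain$'s single-split accuracy to the indistinguishability baseline ($0.5$ under the AUC-style normalisation, $0$ under the accuracy-style one) sends $\Unlearn_1$ and $\Unlearn_2$ there too, collapsing the ordering; and the raw accuracy is itself split-dependent --- on the complementary split $\mathcal{F} = \{A, B, C\}$ it is instead $1/3, 1/2, 2/3$ for $\Retrain, \Unlearn_1, \Unlearn_2$ --- so the offset metric is unreliable, in contrast to the split-independent swap test. This proves the claim. I do not expect a deep obstacle: the only points requiring care are reducing $\Adv_s$ to the clean sum-of-indicators form and verifying the telescoping of the swap pair; everything afterwards is arithmetic with the six given values.
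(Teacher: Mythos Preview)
Your proposal is correct and follows essentially the same approach as the paper: both arguments pair each data point's contribution across the swap splits $s$ and $s'$ and observe that the net contribution depends only on whether the point's MIA prediction flips when its score is bumped by $\Delta_j$. Your closed-form window count $\#\{x : 0.5 - \Delta_j \le p(x) < 0.5\}$ is a slightly crisper way to package the split-independence than the paper's more verbal indicator-pairing argument, but the substance is identical.
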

\begin{proof}
    Given a split \(s\), denote the predicted MIA result as \(\hat{Y}^s_i \in \{0, 1\}\), and the actual membership as \(Y^s_i = \mathbbm{1}_{i \in \mathcal{F}}\) for \(i \in \{A, B, \ldots, F\}\). Then, consider a simple adversary \(\mathcal{A}\): after getting the predicted MIA probability, i.e., \(\Pr(\hat{Y}^s_i = 1)\), we update \(\hat{Y}^s_i\) as \(1\) if \(\Pr(\hat{Y}^s_i = 1) \geq 1 / 2\), and \(0\) otherwise. Then, the advantage of a particular split \(s\) is \(\Pr_m(\mathcal{A}^{\mathcal{O}_s(0)}(m) = 1) - \Pr_m(\mathcal{A}^{\mathcal{O}_s(1)}(m) = 1) = \Pr_i(\hat{Y}^s_i = 1 \mid Y^s_i = 0) - \Pr_i(\hat{Y}^s_i = 1 \mid Y^s_i = 1)\) for \(i \in \{A, B, \ldots, F\}\). These probabilities are essentially an average of indicator variables: for example, \(\Pr_i(\hat{Y}^s_i = 1 \mid Y^s_i = 0) = \sum_{i\colon Y^s_i = 0} \mathbbm{1}_{\hat{Y}^s_i = 1} / \vert \{i \colon Y^s_i = 0\}\vert\), and since we’re considering equal-size splits, \(\vert \{i \colon Y^s_i = 0\} \vert = \vert \{i\colon Y^s_i=1\}\vert=6/2=3\) in our example.

To see how the calculation works out, we note that for a SWAP split \((s, s')\), each \(i\) appears in the forget set and the test set exactly once when calculating the SWAP advantage, i.e., \(Y^s_i=1\) and \(Y^{s'}_i=0\) or \(Y^s_i=0\) and \(Y^{s'}_i=1\). Furthermore, suppose \(\hat{Y}^s_i\) and \(\hat{Y}^{s'}_i\) are the same, e.g., when considering \(\Retrain\), then the indicators \(\mathbbm{1}_{\hat{Y}^s_i=1}\) and \(\mathbbm{1}_{\hat{Y}^{s'}_i=1}\) will be the same and appear in pair (specifically, with opposite sign, one in \(\mathrm{Adv}_s\) and another in \(\mathrm{Adv}_{s'}\)), and hence cancel each other out. This is why the advantage is \(0\) for \(\Retrain\). This pairing of indicators for \(i\) under splits \(s\) and \(s'\) happens for imperfect unlearning algorithms, but the indicator might change: \(\hat{Y}^s_i\) can swap from \(0\) to \(1\) due to imperfect unlearning. If this happens, \(i\) contributes \(1 / 3\) to the denominator of the SWAP advantage formula, hence \(1/6\) in total (divided by \(2\) at the end). With this observation, we see that for \(\Unlearn_1\), only \(B\)’s prediction will be flipped from \(0\) to \(1\), hence \(\Unlearn_1\)’s advantage is \(1/6\) in the SWAP test. The same argument applies for \(\Unlearn_2\) where both \(B\) and \(C\)’s predictions are flipped, hence the advantage is \(2/6=1/3\).
\end{proof}

\section{Omitted details from \texorpdfstring{\Cref{sec:experiment}}{Section 4}}\label{adxsec:experiment}
\subsection{Computational resource and complexity}\label{adxsubsec:computational-resource-and-complexity}
We conduct our experiment on \texttt{Intel(R) Xeon(R) Gold 6338 CPU @ 2.00GHz} with \(4\) \texttt{A40 NVIDIA GPUs}. It takes approximately \(6\) days to reproduce the experiment of standard deviation comparison between \emph{SWAP} test and random dataset splitting. It takes approximately \(1\) day to reproduce the experiment of dataset size and random seeds. Furthermore, it takes approximately \(4\) days to reproduce the experiment of differential private testing.

\subsection{Details of training}\label{adxsubsec:detail-of-training}
For target model training without differential privacy (DP) guarantees, we consider using the ResNet-20~\citep{he2016deep} as our target model and train it with Stochastic Gradient Descent (SGD)~\citep{ruder2016overview} optimizer with a MultiStepLR learning rate scheduler with milestones \([100,150]\) and an initial learning rate of \(0.1\), momentum \(0.9\), weight decay \(10^{-5}\). Moreover, we train the model with \(200\) epochs, and we empirically observe that this guarantees convergence. For a given dataset split, we average \(3\) models to approximate the randomness induced in training and unlearning procedures.

For training DP models, we use DP-SGD~\citep{Abadi_2016} to provide DP guarantees. Specifically, we adopt the OPACUS implementation~\citep{opacus} and use ResNet-18~\citep{he2016deep} as our target model. The model is trained with the RMSProp optimizer using a learning rate of \(0.01\) and of \(20\) epochs. This ensures convergence as we empirically observe that \(20\) epochs suffice to yield a comparable model accuracy. Considering the dataset size, we use \(\delta = 10^{-5}\) and tune the max gradient norm individually.

\subsection{IC score and MIA score}\label{adxsubsec:IC-score-and-MIA-score}
One of the two metrics we choose to compare against is the Interclass Confusion (IC) Test~\citep{goel2023adversarial}. In brief, the IC test ``confuses'' a selected set of two classes by switching their labels. This is implemented by picking two classes and randomly selecting half of the data from each class for confusion. Then the IC test proceeds to train the corresponding target models on the new datasets and perform unlearning on the selected set using the unlearning algorithm being tested, and finally measures the inter-class error of the unlearned models on the selected set, which we called the \emph{memorization score} \(\gamma\). Similar to the advantage, the memorization score is between \([0, 1]\), and the lower, the better since ideally, the unlearned model should have no memorization of the confusion. Given this, to compare the IC test with the Unlearning Quality \(\mathcal{Q}\), we consider \(1 - \gamma\), and refer to this new score as the \emph{IC score}.

On the other hand, the MIA AUC is a popular MIA-based metric to measure the performance of the unlearning. It measures how MIA performs by calculating the AUC (Area Under Curve) of MIA on the union of the test set and the forget set. We note that AUC is a widely used evaluation metric in terms of classification models since compared to directly measuring the accuracy, AUC tends to measure how well the model can discriminate against each class. Finally, as defined in \Cref{sec:experiment}, we let the \emph{MIA score} be \(1 - \text{MIA AUC}\) to have a fair comparison.

\paragraph{Model Accuracy \emph{versus} DP Budgets.}
We also report the classification accuracy of the original model trained with various DP budgets in \Cref{fig:adx-result-accuracy}. As can be seen, the classification accuracy increases as the \(\epsilon\) is relaxed to a larger value, showing the inherent trade-off between DP and utility. For experiments about Unlearning Quality in \Cref{tab:unlearning-quality} and MIA score in \Cref{tab:MIA-score}, the original models are shared and thus have the same results (\Cref{tab:MIA-score-accuracy}). We note that measuring the IC scores requires dataset modifications, so the model accuracy in the experiments of IC score (\Cref{tab:IC-accuracy}) differs slightly from that in experiments of Unlearning Quality and MIA score (\Cref{tab:MIA-score-accuracy}).

\begin{table*}[htpb]
    \caption{Model accuracy \emph{versus} DP Budgets. See \Cref{tab:unlearning-quality} for more context.}
    \label{fig:adx-result-accuracy}
    \begin{subtable}{0.49\textwidth}
        \centering
        \caption{Results for experiments in \Cref{tab:IC}.}
        \label{tab:IC-accuracy}
        \resizebox{\textwidth}{!}{
            \begin{tabular}{c|cccc}
                \toprule
                \(\epsilon\)
                         & 50                       & 150                      & 600                      & \(\infty\)               \\
                \midrule
                Accuracy & 0.442\textsuperscript{*} & 0.506\textsuperscript{*} & 0.540\textsuperscript{*} & 0.639\textsuperscript{*} \\
                \bottomrule
            \end{tabular}
        }
    \end{subtable}\hfill%
    \begin{subtable}{0.49\textwidth}
        \centering
        \caption{Results for experiments in \Cref{tab:unlearning-quality,tab:MIA-score}.}
        \label{tab:MIA-score-accuracy}
        \resizebox{\textwidth}{!}{
            \begin{tabular}{c|cccc}
                \toprule
                \(\epsilon\)
                         & 50                       & 150                      & 600                      & \(\infty\)               \\
                \midrule
                Accuracy & 0.485\textsuperscript{*} & 0.520\textsuperscript{*} & 0.571\textsuperscript{*} & 0.660\textsuperscript{*} \\
                \bottomrule
            \end{tabular}
        }
    \end{subtable}
\end{table*}

\begin{remark}
We would like to clarify the performance difference compared to the common literature, which can be attributed to the dataset split. The original dataset is split evenly into the target and shadow datasets for the purpose of implementing MIA. Within the target dataset, further partitioning is performed to create the retain, forget, and test sets. As a result, only about 30\% of the full dataset remains available for training the model, significantly reducing the effective training data. Note that the data split is necessary for our experiments, so we cannot get significantly more training data.

We experimented with training on the full dataset and applied data augmentation while keeping all other configurations unchanged. With the full dataset, the model achieved an accuracy of \(85.34\%\). After incorporating data augmentation, the accuracy further improved to \(91.13\%\), aligning with the past literature. This simple ablation study validates that the performance difference mainly comes from the difference in data size and the omission of data augmentation. 
   
We select a large \(\epsilon\) in our differential privacy experiments due to the significant drop in accuracy observed when  \(\epsilon\)  is small, which stems from the same dataset size limitation. We include an additional experiment with a smaller \(\epsilon\) in the linear setting, as presented at the end of \Cref{sec:linear-model}.
\end{remark}

\subsection{Additional experiments}\label{adxsubsec:additional-experiments}
In this section, we provide additional ablation experiments on our proposed Unlearning Quality metric by considering varying various parameters and settings.

\paragraph{Unlearning Quality Versus Model Architecture.}
We provide additional experimental results under different model architectures. The experiment is conducted with the CIFAR10 dataset and \(\alpha = 0.1\). The results are shown in \Cref{tab:model}. Interestingly, we observe once again that the relative ranking of different unlearning methods stays mostly consistent across different architectures.

\begin{table}[H]
    \centering
    \caption{Unlearning Quality \emph{versus} different model architectures. The relative ranking of different unlearning methods stays mostly consistent under different architectures.}
    \label{tab:model}
    \begin{tabular}{lccc}
        \toprule
        \(\Unlearn\)   & \textbf{ResNet44}   & \textbf{ResNet56}   & \textbf{ResNet110}  \\
        \midrule
        \(\Retrfinal\) & \(0.497 \pm 0.040\) & \(0.473 \pm 0.010\) & \(0.476 \pm 0.036\) \\
        \(\Ftfinal\)   & \(0.495 \pm 0.041\) & \(0.471 \pm 0.011\) & \(0.477 \pm 0.039\) \\
        \(\Fisher\)    & \(0.847 \pm 0.051\) & \(0.832 \pm 0.032\) & \(0.895 \pm 0.020\) \\
        \(\NegGrad\)   & \(0.562 \pm 0.025\) & \(0.537 \pm 0.016\) & \(0.520 \pm 0.042\) \\
        \(\SalUN\)     & \(0.716 \pm 0.008\) & \(0.692 \pm 0.013\) & \(0.672 \pm 0.033\) \\
        \(\SSD\)       & \(0.939 \pm 0.053\) & \(0.935 \pm 0.056\) & \(0.968 \pm 0.017\) \\
        \bottomrule
    \end{tabular}
\end{table}

\paragraph{Unlearning Quality Versus Dataset.}
We provide additional experiments on vision datasets CIFAR100~\citep{krizhevsky2009learning} and MNIST~\citep{lecun1998mnist}, and natural language dataset SST5~\citep{socher-etal-2013-recursive} . The experiment is conducted with the ResNet20 model architecture and \(\alpha = 0.1\) on CIFAR100 and CIFAR10. For SST5 dataset, the experiment is conducted on the BERT model~\citep{devlin2019bert} and \(\alpha = 0.1\). We note that CIFAR100 has \(100\) classes and \(50000\) training images, SST5 has \(5\) classes and \(11855\) training sentences while MNIST has \(10\) classes and \(60000\) training images. CIFAR100 is considered more challenging than CIFAR10, while MNIST is considered easier than CIFAR10. The results are shown in \Cref{tab:datasets}. 

In this experiment, besides the consistency we have observed throughout this section, we in addition observe that the Unlearning Quality reflects the \emph{level of difficulties} of unlearning on different datasets. Specifically, the Unlearning Quality of most unlearning methods is higher on MNIST while lower on CIFAR100, in comparison to those on CIFAR10 and SST5.

\begin{table}[H]
    \centering
    \caption{Unlearning Quality \emph{versus} different datasets. In addition to the consistency of the Unlearning Quality across unlearning methods, the Unlearning Quality scores are higher on MNIST and SST5 while lower on CIFAR100, in comparison to those on CIFAR10, reflecting the level of difficulties on different datasets. }
    \begin{tabular}{lccc}
        \toprule
        \(\Unlearn\)   & \textbf{CIFAR100}   & \textbf{MNIST}  & \textbf{SST5}\\
        \midrule
        \(\Retrfinal\) & \(0.464 \pm 0.027\) & \(0.976 \pm 0.020\) & \(0.404 \pm 0.013\)\\
        \(\Ftfinal\)   & \(0.462 \pm 0.028\) & \(0.977 \pm 0.021\) & \(0.404 \pm 0.013\)\\
        \(\Fisher\)    & \(0.606 \pm 0.008\) & \(0.990 \pm 0.002\) & \(0.552 \pm 0.014\)\\
        \(\NegGrad\)   & \(0.669 \pm 0.016\) & \(0.980 \pm 0.017\) & \(0.376 \pm 0.017\)\\
        \(\SalUN\)     & \(0.697 \pm 0.082\) & \(0.995 \pm 0.002\) & \(0.456 \pm 0.015\)\\
        \(\SSD\)       & \(0.923 \pm 0.058\) & \(0.998 \pm 0.001\) & \(0.592 \pm 0.011\)\\
        \bottomrule
    \end{tabular}
    \label{tab:datasets}
\end{table}

\paragraph{Validation of \Cref{thm:Adv-certified} With Linear Models}\label{sec:linear-model}
We experimented with a method from \citet{guo2020certified}, which is an unlearning algorithm for linear models with \((\epsilon, \delta)\)-certified removal guarantees. We followed the experimental setup in \citet{guo2020certified}, training a linear model on part of the MNIST dataset for a binary classification task distinguishing class 3 from class 8.

In their algorithm, the parameter \(\epsilon\) controls a budget indicating the extent of data that can be unlearned. During the iterative unlearning, when the accumulated gradient residual norm is beyond the unlearning budget, the unlearning guarantee is broken and retraining will kick in. So \(\epsilon\) cannot be made arbitrarily small. Below, we report the Advantage metric for their unlearning algorithm with different \(\epsilon\) (\(\delta\) is fixed as \(1e-4\)), as well as the Retrain method as reference:
\begin{table}[h]
    \centering
    \caption{Change of advantage in linear models with respect to decreasing \(\epsilon\).}
    \label{tab:linear}
    \begin{tabular}{c c c c c c}
        \toprule
        \(\epsilon\) & 0.8   & 0.6   & 0.4   & 0.3   & \(\Retrain\) \\
        \midrule
        Advantage    & 0.010 & 0.009 & 0.005 & 0.003 & 0.002        \\
        \bottomrule
    \end{tabular}
\end{table}

We can see that the Advantage monotonically decreases as \(\epsilon\) decreases, which aligns with our \Cref{thm:Adv-certified}.

\end{document}